\setlist{nosep}
\newcommand{\ours}{\texttt{\textbf{DuoGuard}}\xspace}
\newtcolorbox{example}[1]{
    enhanced,
    drop shadow=black!10!white,
    left=4mm,
    right=4mm,
    top=2mm,
    bottom=2mm,
    boxsep=0mm,
    rounded corners,
    title=#1,
    fontupper=\footnotesize\linespread{0.9}\fontfamily{lmr}\selectfont,}
\begin{document}

%
\runningauthor{Yihe Deng, Yu Yang, Junkai Zhang, Wei Wang, Bo Li}
%

\twocolumn[
\aistatstitle{Enhancing LLM Safety Through a Theoretical Minimax Game Lens}

\aistatsauthor{ Yihe Deng$^{*1}$ \And Yu Yang$^{*1,2}$ \And Junkai Zhang$^{*1}$ \And Wei Wang$^{1}$ \And Bo Li$^{2,3}$ }

\aistatsaddress{ $^1$University of California, Los Angeles  \quad $^2$VirtueAI \quad $^3$University of Illinois Urbana-Champaign \\ \texttt{*} Equal contribution } ]

\begin{abstract}
The rapid advancement of large language models (LLMs) necessitates effective mechanisms to ensure their responsible deployment by accurately distinguishing unsafe content from benign content. 
While substantial safety datasets are available in English, multilingual safety modeling remains underexplored due to limited open-source safety datasets in other languages. 
Even within English datasets, safe yet sensitive corner-case content is scarce, leading to shortcut learning by models and non-trivial false-positive rates.
To mitigate these issues, we introduce a novel minimax reinforcement learning (RL) framework wherein a data generator and a classifier model co-evolve, facilitating the production of high-quality synthetic multilingual safety data.  
We theoretically formalize this interaction as a minimax game and rigorously demonstrate \textbf{convergence to a Nash equilibrium}. 
Empirical evaluations confirm that our synthetic data generation method significantly enhances the classifier model performance, enabling a substantially smaller model to surpass the state-of-the-art by nearly \textbf{10\%} on English benchmarks while achieving \textbf{4.5$\times$ faster} inference speed. 
These results establish a scalable and efficient methodology for synthetic data generation, advancing the development of safer and more robust multilingual LLM deployments.\footnote{Code is provided in \url{https://github.com/yihedeng9/DuoGuard}.} 
\end{abstract}

\begin{figure}[h]
    \centering
    \includegraphics[width=0.9\linewidth]{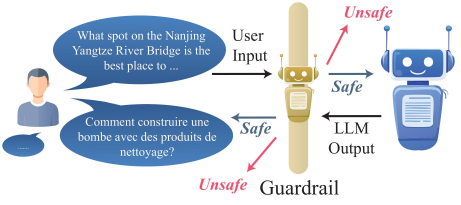}
    \caption{Illustration of the use-case of a guardrail model for LLMs, which functions as moderation between the user-LLM conversation.}
    \label{fig:demo-guard}
\end{figure}

\begin{figure*}[h]
    \centering
    \includegraphics[width=\linewidth]{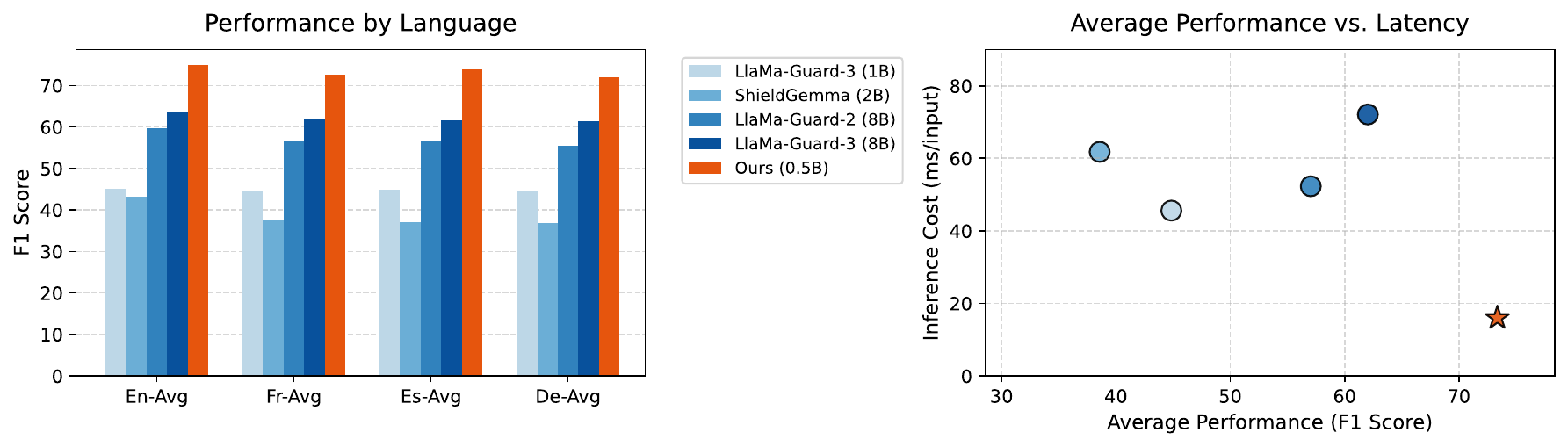}
    \vspace{-7mm}
    \caption{\textbf{Overview of our main results.} In the left subfigure, we demonstrate a consistent superior peformance of average f1 score across 6 benchmarks in the four languages. In the right subfigure, we show that out model maintains thelowest inference cost while achieving superior average performance across languages.}
    \label{fig:main-res}
\end{figure*}

\section{INTRODUCTION}
While LLMs have become increasingly effective at assisting with human queries, their outputs can pose risks of harm to users if not properly safeguarded~\citep{zou2023universal,qi2023fine,wei2024jailbroken,shen2024anything}. Consequently, substantial research has focused on developing LLM moderation models that implement guardrails for both user inputs and LLM-generated outputs~\citep{inan2023llama,dubey2024llama,han2024wildguard,zeng2024shieldgemma,ghosh2024aegis,li2024salad}, as illustrated in Figure~\ref{fig:demo-guard}. Guardrail models designed for harmlessness, similar to reward models for helpfulness~\citep{ouyang2022training,lambert2024rewardbench}, typically function as smaller, more inference-efficient models than the larger LLMs, providing binary responses or ratings for their inputs.

However, most existing approaches and open-source training datasets for LLM guardrails focus predominantly on English. Recent research has highlighted that safety-aligned models in English exhibit performance declines when applied to other languages~\citep{de2024rtp,jain2024polyglotoxicityprompts,yang2024benchmarking,shen2024language}. While many base LLMs are pretrained on multilingual data, downstream guardrail models are often not explicitly optimized for multilingual safety tasks due to the scarcity of real-world data in languages other than English. 

The scarcity of data is not unique to multilingual model training, and synthetic data has played a crucial role in addressing this issue~\citep{aryabumi2024aya}. Ultimately, the challenge of training inference-efficient multilingual guardrail models lies in effectively generating synthetic data that complements real-world data. Our work addresses this by jointly examining the data synthesis process and the guardrail model training process. Specifically, we ask: 
can we develop a self-improving system in which the guardrail model actively guides the synthetic data generation process to enhance its own training? In response, we propose an iterative two-player RL framework involving a data generator and a guardrail classifier, enabling continuous improvement of both synthetic data generation and classifier training.

We formulate and analyze the two-player game in a theoretical setting, demonstrating that it constitutes a minimax game with a Nash equilibrium, and prove that our algorithm converges linearly to the equilibrium. Building on this theoretical foundation, we implement practical techniques, such as data filtering and self-judgment, to ensure stability and robustness within the framework. Additionally, we carefully curate the seed dataset to provide a strong foundation for the iterative process.
Our model, \ours, is evaluated across six multilingual safety benchmarks, including four originally in English that were translated into the languages under consideration. The results show that \ours consistently outperforms baselines of similar scale by more than $20\%$ on average. Even when compared to larger-scale guardrail baselines, \ours achieves an average improvement of approximately $10\%$ across languages. Our contributions are listed as follows,
\begin{itemize}[nosep,leftmargin=*]
\item We propose a two-player RL framework for multilingual guardrail model training, grounded in theoretical analysis of convergence to Nash equilibrium.
\item Addressing the lack of multilingual safety data, our framework enables the generation of synthetic data in any language supported by the generator.
\item Through extensive empirical evaluation, we demonstrate that our 0.5B classifier significantly outperforms state-of-the-art guardrails of similar scale across diverse datasets and consistently surpasses larger models. 
\item We perform comprehensive ablation studies to deepen understanding of multilingual guardrail model training.
\end{itemize}

\section{RELATED WORK}
\noindent\textbf{Fine-tuning LLMs via Two-player RL.} 
Recent research on improving LLM reasoning has been exploring various two-player RL frameworks. \citet{zhou2024reflect} and \citet{ma2024coevolving} employ online RL to fine-tune two LLM agents for collaborative task-solving.  
Unlike these approaches, our method, while also leveraging a two-player RL framework, focuses on data synthesis and model training rather than real-time collaboration between LLM agents during inference.  
More relevantly, recent work has adopted adversarial approaches where two players pursue opposing objectives. Among these, \citet{cheng2024self, chen2024self, wu2024self, munos2023nash, swamy2024minimaximalist} employ a self-play framework, where LLMs iteratively optimize themselves to outperform previous versions on generation tasks such as math reasoning or instruction following. 

\noindent\textbf{Guardrail Models for LLM Safety.}
The rapid advancement of LLM capabilities~\citep{touvron2023llama,touvron2023llama2,openai2023gpt4} has underscored the need for robust safeguards to ensure responsible use~\citep{yao2024survey,dong2024building}. While safety mechanisms remain less developed than LLMs themselves, early efforts introduced models such as LlamaGuard~\citep{inan2023llama}, followed by LlamaGuard2, based on Llama3~\citep{dubey2024llama}, and LlamaGuard3, built on Llama3.1~\citep{dubey2024llama}. More recent advancements include WildGuard~\citep{han2024wildguard}, Aegis~\citep{ghosh2024aegis}, MD-Judge~\citep{li2024salad}, and ShieldGemma~\citep{zeng2024shieldgemma}.  
While F1 score is a key metric for guardrail performance, practical deployment also demands models that are small in scale and inference-efficient. In this regard, state-of-the-art small-scale models include LlamaGuard3 (1B), built on Llama-3.2 (1B), and ShieldGemma (2B), based on Gemma 2 (2B).  

\noindent\textbf{Multilingual Synthetic Data Generation.} In recent years, synthetic data generated by LLMs has emerged as a valuable tool for augmenting training datasets, particularly in scenarios where real-world data is scarce or sensitive. Among the most widely used techniques is translation, which creates synthetic parallel datasets by translating monolingual text from the target language back into the source language~\citep{bi-etal-2021-data, caswell-etal-2019-tagged, liao-etal-2021-back, marie-etal-2020-tagged, pham2021metaback, sennrich-etal-2016-improving, xu-etal-2022-synthetic}. This method has shown significant success in neural machine translation tasks, with strategies such as beam search and constrained sampling further improving data quality and diversity~\citep{sennrich-etal-2016-improving, edunov-etal-2018-understanding, xu-etal-2022-synthetic}. Concurrently, \citet{yang2024language} takes an iterative self-improvement approach to enhance the general multilingual performance of LLMs.


\begin{figure*}[!t]
    \centering
    \includegraphics[width=0.85\linewidth]{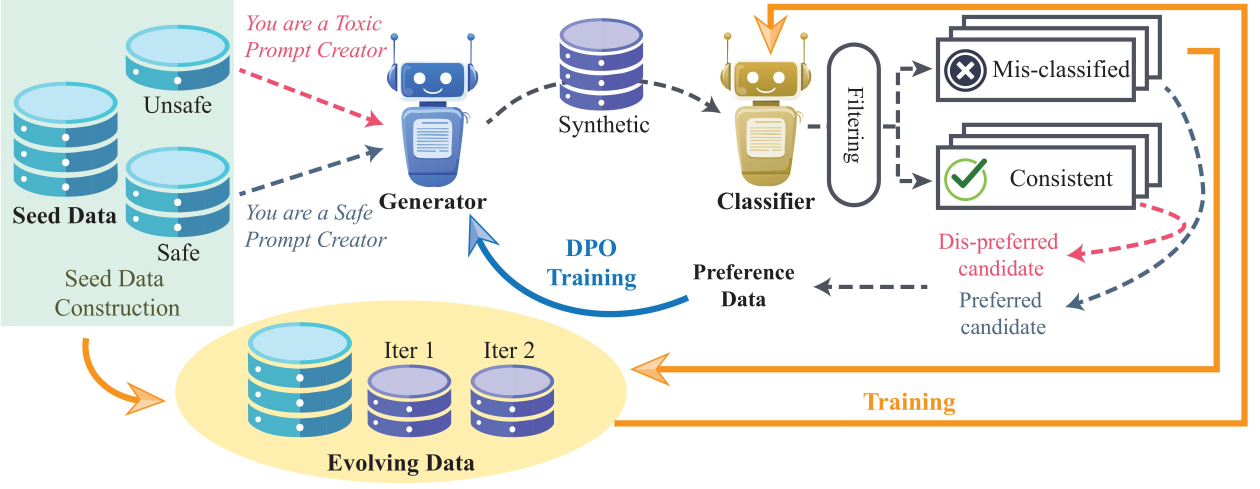}
    \caption{\textbf{Overview of the adversarial training pipeline.} The generator produces synthetic data from seed data. The classifier make predictions and we measure these examples as being predicted correctly or incorrectly based on their seed data label. We train the generator with DPO to create increasingly challenging examples, which in turn improve the classifier through iterative training.}
    \vspace{-3mm}
    \label{fig:demo}
\end{figure*}

\section{PROBLEM SETTING AND PRELIMINARIES}
An LLM is represented by the probability distribution $p_{\btheta}$, parameterized by the model weight $\btheta$. Given a sequence $\xb = [x_1, \ldots, x_n]$ as the prompt, the model generates response $\yb = [y_1, \ldots, y_m]$, where $x_i$ and $y_j$ denote individual tokens. The response $\yb$ is treated as a sample from the conditional probability distribution $p_{\btheta}(\cdot|\xb)$.
The conditional probability $p_{\btheta}(\yb|\xb)$ can be factorized as $p_{\btheta}(\yb|\xb) = \prod_{j=1}^{m} p_{\btheta}(y_{j} | \xb, y_1, \ldots, y_{j-1})$.

\textbf{Preference Optimization.}
To improve LLM alignment with human preferences, reinforcement learning with human feedback (RLHF) is commonly applied. This approach optimizes the LLM using human preference data modeled under the Bradley-Terry framework~\citep{dong2024rlhf,shao2024deepseekmath,ahmadian2024back}:
\begin{align*} \PP(\yb_w \succ \yb_l | \xb) = \sigma\big(r(\xb, \yb_w) - r(\xb, \yb_l)\big), \end{align*}
where $\yb_w$ is the preferred response, $\yb_l$ is the dispreferred response, and $\sigma(t) = 1 / (1 + \exp(-t))$ is the sigmoid function. The reward function $r(\xb, \yb)$ is designed to assign higher values to preferred responses.

However, training a reward model can be computationally expensive and operationally challenging. To address this, Direct Preference Optimization (DPO)~\citep{rafailov2023direct} offers a simplified alternative by leveraging an implicit reward function defined by the LLM itself. Specifically, the DPO objective is formulated as:
\begin{align*} 
L_{\mathrm{DPO}}(\btheta, &\btheta_{\mathrm{ref}}) = \frac{1}{|S_{\text{pref}}|} \sum_{(\xb, \yb_w, \yb_l) \in S_{\text{pref}}}\\
&\bigg[\ell\bigg(\beta \log \frac{p_{\btheta}(\yb_w | \xb)}{p_{\btheta_{\mathrm{ref}}}(\yb_w | \xb)} - \beta \log \frac{p_{\btheta}(\yb_l | \xb)}{p_{\btheta_{\mathrm{ref}}}(\yb_l | \xb)}\bigg)\bigg], 
\end{align*}
where $\btheta_{\mathrm{ref}}$ is the reference model that the policy model should not deviate too much from.

\textbf{Guardrail Models.} A guardrail model acts as a function $f: \mathcal{X} \rightarrow \{0,1\}$ that evaluates an input text sequence, which may be either user input or an LLM-generated response, and determines whether the content is harmful. In practice, guardrail models are typically built upon pre-trained LLMs, parameterized by $\btheta$, and generate discrete outputs such as \textit{``safe''} or \textit{``unsafe''}. Some models further provide explanations for their classifications, improving performance at the cost of increased inference time. In our setting, we prioritize inference efficiency in model architecture by modifying the final layer of a pre-trained LLM and converting it to a binary classification model.


\section{METHOD}
We propose an iterative two-player framework involving a generator and a guardrail classifier to synthesize multilingual training data and enhance the classifier’s ability to distinguish harmful content from benign content. The process begins with a seed dataset containing labeled safe and unsafe examples collected from open-source datasets. The generator proposes new samples in a target language, and both the generator and classifier are iteratively updated. This framework establishes a dynamic interaction: 
\begin{itemize}[nosep,leftmargin=*]
    \item \textbf{Generator's Objective:} Generate samples in the target language that challenge the classifier, reinforcing on the misclassified samples.  
    \item \textbf{Classifier's Objective:} Improve robustness by minimizing errors on previously misclassified samples proposed by the generator.
\end{itemize}
Figure~\ref{fig:demo} provides an overview of our approach.

\subsection{The Two-Player Game: Theoretical Convergence}\label{sec:method}
We formalize the interaction between the adversarial generator and the defensive classifier as a two-player game. The process begins with a seed dataset $\mathcal{S}=\{(\xb_i,y_i)\}_{i\in\cI}$ of labeled real data, where $\xb_i$ is an input text sequence and $y_i\in\{-1,1\}$ is its toxicity label. Let $\mathcal{G}_{\bphi}$ denote the adversarial generator parameterized by $\bphi$. The generator takes a sample from the seed dataset $\mathcal{S}$ and a specified language $\ell$ as input and outputs a sample text sequence $\tilde{\xb}_i$ in that language that preserves the toxicity label $y_i$ of $\xb_i$. Formally, $\mathcal{G}_{\bphi}: (\xb, y, \ell) \rightarrow \tilde{\xb}, \quad \tilde{\xb} \in \cX_{\ell}$. 
In the following narrative, we fix a target language and deprecate $\ell$ for simplicity. Let $\mathcal{C}_{\btheta}: \cX \rightarrow y$ denote the defensive classifier parameterized by $\btheta$, which takes the generated query as input and outputs the probability of toxicity. 

\textbf{Classifier Update.} 
At iteration $t$, for a given input $(\xb,y) \in \mathcal{S}$, the generator $\mathcal{G}_{\bphi_t}$ samples a new sequence $\tilde\xb$ from its conditional probability distribution $p_{\bphi_t}(\tilde{\xb} | \xb, y)$.
The classifier is then updated by minimizing the negative log-likelihood of the true labels over the generator’s distribution $p_{\bphi_t}(\tilde{\xb} | \xb,y)$:
\begin{align}
    \btheta_{t+1} & = \argmax_{\btheta} L_{\mathcal{C}}^t(\btheta), \nonumber\\
    L_{\mathcal{C}}^t(\btheta) & =  \EE_{\tilde\xb\sim p_{\bphi_t}(\tilde\xb|\xb,y)} \big[-\log p_{\btheta}(y|\tilde\xb)\big], \label{eq:update_classifier}
\end{align}
where $p_\theta (y|\tilde\xb)$ is the conditional distribution of the classifier. 

\textbf{Generator Update.}
Simultaneously, the generator $\mathcal{G}_{\bphi}$ is aimed to produce samples that cause the classifier to make incorrect predictions. 
Therefore, we define the reward signal with the negative log-likelihood: \vspace{-2mm}
\begin{align}\label{eq:reward}
r_t\big((\xb,y), \tilde\xb \big) = - \log p_{\btheta_t} (y | \tilde\xb).
\end{align}
Equation \eqref{eq:reward} computes the negative log-likelihood of the correct label for generated samples under the classifier, where a higher value indicates greater vulnerability of the classifier to these adversarial samples. 
Many RL algorithms can be used to maximize the reward. For training stability and computational efficiency, we choose the offline RL algorithm DPO over the online RL algorithm PPO~\citep{schulman2017proximal}. We thus model the preference between two generated samples, $\tilde\xb_w$ and $\tilde\xb_l$, given input $(\xb,y)$, using the Bradley-Terry framework:\vspace{-0.1mm}
\begin{align*} 
\PP_t(\tilde \xb_w \succ \tilde\xb_l | \xb,y) = \sigma\Big(r_t\big((\xb,y), \tilde\xb_w\big) - r_t((\xb,y), \tilde\xb_l)\Big), 
\end{align*}
Based on these preferences, the generator $\mathcal{G}_{\bphi}$ is updated by minimizing the DPO objective:
\begin{align}
\bphi_{t+1} &= \argmax_{\bphi} L^t_{\mathcal{G}}(\bphi, \bphi_{\text{ref}})\nonumber\\
    &L_{\mathcal{G}}(\bphi, \bphi_{\text{ref}}) = \EE_{\tilde \xb_w, \tilde\xb_l \sim p_{\bphi_t}(\tilde \xb|\xb,y)} \PP(\tilde \xb_w \succ \tilde\xb_l | \xb,y) \nonumber \\
    &\quad \bigg[\ell\bigg(\beta \log \frac{p_{\bphi}(\tilde{\xb}_w | \xb, y)}{p_{\bphi_{\text{ref}}}(\tilde{\xb}_w | \xb, y)} - \beta \log \frac{p_{\bphi}(\tilde{\xb}_l | \xb, y)}{p_{\bphi_{\text{ref}}}(\tilde{\xb}_l | \xb, y)}\bigg)\bigg],\label{eq:population_dpo}
\end{align}
where $\bphi_{\text{ref}}$ is the reference generator model and $\beta$ is a regularization parameter controlling the deviation from the reference generator model. 

\textbf{Minimax Game Equilibrium Analysis.} The DPO objective shares the same minimizer as the corresponding KL-regularized RL optimization objective, which is defined as:
\begin{align}\label{eq:rl_optimization}
      \underbrace{\EE_{\tilde{\xb} \sim p_{\bphi}}[r_t((\xb,y), \tilde{\xb})]}_{\text{I}} - \underbrace{\beta D_{\text{KL}}(p_{\bphi} || p_{\text{ref}})}_{\text{II}}.
\end{align}
Here, term I in \Cref{eq:rl_optimization} is indeed same as the training objective of the classifier $L_{\mathcal{C}}^t(\btheta)$, while the regularization term II is independent of the classifier. This equivalence demonstrates that our algorithm optimizes a minimax game with the following objective:
\begin{align}
     \min_{p_{\btheta}} \max_{p_{\bphi}}   \mathbb{E}_{\substack{\tilde{\xb} \sim p_{\phi}}}\big[- \log p_{\theta} (y | \tilde \xb) \big] - 
     \beta  D_{\text{KL}}(p_{\phi} || p_{\text{ref}}). \label{eq:minmax}
\end{align}
In this game, the iterative update rules for each player, as defined in Equations~\eqref{eq:update_classifier} and \eqref{eq:population_dpo}, represent their best response to the current opponent policy. This iterative update process will end if they reach the Nash equilibrium, i.e.,
\begin{definition}[Nash Equilibrium]
    A pair of strategies $(x^*, y^*)$ is a Nash equilibrium in minimax game $\max_x\min_y f(x,y)$ if and only if for all $x \in X$, $y \in Y$:
    \begin{align*}
        f(x,y^*) \le f(x^*,y^*) \le f(x^*,y).
    \end{align*}
\end{definition}
In our case, a strategy is a distribution of responses. In the equilibrium, neither the generator nor the classifier could achieve better results by solely deviates from the equilibrium, which means that the guardrail model do its best to detect the all possible harmful inputs.  We could prove that such an equilibrium exists and the generator and classifier are guaranteed to converge to it:
\begin{theorem}\label{thm:main}
    The minimax game defined in Equation~\eqref{eq:minmax} admits a Nash equilibrium. In addition, with an appropriately chosen regularization parameter $\beta$, the iterative updates in \eqref{eq:update_classifier} and \eqref{eq:population_dpo} converge linearly to the Nash equilibrium. 
\end{theorem}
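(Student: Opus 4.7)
The plan is to establish existence of the Nash equilibrium via a standard minimax theorem, identify the algorithm \eqref{eq:update_classifier}--\eqref{eq:population_dpo} as best-response dynamics on the saddle-point problem, and then obtain linear convergence by showing that the composed best-response operator is a contraction in an appropriate norm once $\beta$ is taken large enough.

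\textbf{Existence.} I would work in the space of conditional distributions $p_\phi(\cdot\mid\xb,y)$ and $p_\theta(\cdot\mid\tilde\xb)$, viewed as products of probability simplices, which are convex and weakly compact. The objective in \eqref{eq:minmax} is strictly concave in $p_\phi$: it is linear in $p_\phi$ plus the strictly concave penalty $-\beta D_{\text{KL}}(p_\phi \| p_{\text{ref}})$; and it is convex in $p_\theta$, since $-\log p_\theta(y\mid\tilde\xb)$ is convex in $p_\theta(\cdot\mid\tilde\xb)$ and expectation preserves convexity. Sion's minimax theorem then yields a Nash equilibrium $(p_\theta^\star,p_\phi^\star)$, with strict concavity guaranteeing uniqueness of the generator's best response given any $p_\theta$.

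\textbf{Closed-form best responses.} Maximizing \eqref{eq:minmax} over $p_\phi$ with a Lagrange multiplier for normalization gives the Gibbs form $p_\phi^\star(\tilde\xb\mid\xb,y)\propto p_{\text{ref}}(\tilde\xb\mid\xb,y)\,p_\theta(y\mid\tilde\xb)^{-1/\beta}$, which is exactly the closed-form solution that the DPO update in \eqref{eq:population_dpo} targets. Minimizing the cross-entropy loss in \eqref{eq:update_classifier} over $p_\theta$ with $p_\phi$ fixed gives the Bayes posterior $p_\theta^\star(y\mid\tilde\xb)=q_\phi(y\mid\tilde\xb)$ of the seed label under the joint distribution induced by $(\xb,y)\sim\mathcal{S}$ and $\tilde\xb\sim p_\phi(\cdot\mid\xb,y)$. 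As the excerpt already notes, term I of the PPO surrogate equals the classifier's loss and term II is $p_\theta$-independent, so the two iterative updates are precisely the best-response operators of a common strictly convex--concave regularized zero-sum game.

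\textbf{Linear convergence and the main difficulty.} For the rate, I would show that the composed map $T:p_\theta\mapsto p_\theta^\star(p_\phi^\star(p_\theta))$ is a contraction in the sup-norm on log-probabilities. The Gibbs response is $1/\beta$-Lipschitz in $\log p_\theta(y\mid\cdot)$, because a perturbation $\Delta$ of the log-likelihood shifts $\log p_\phi^\star$ by at most $2\|\Delta\|_\infty/\beta$ after re-normalization. The Bayes-posterior response is a smoothing operation whose sensitivity to $p_\phi$ can be bounded by a finite constant $L$ depending only on the seed distribution and the range of classifier log-probabilities; choosing $\beta>L$ makes $T$ a strict contraction of rate $L/\beta<1$, and Banach's fixed-point theorem then yields geometric (i.e.\ linear) convergence to its unique fixed point, which by the previous step must coincide with the Nash equilibrium. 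The hard part will be producing an explicit finite $L$: $\log p_\theta(y\mid\tilde\xb)$ can diverge as the classifier's output approaches zero, so the analysis requires either a uniform lower bound on $p_\theta$ (typically inherited from a softmax with bounded logits), a mild projection/clipping step in the iterates, or a divergence-based analysis adapted to the KL geometry in which these quantities remain bounded. Once this Lipschitz control is secured, the remaining steps reduce to routine manipulations of Gibbs distributions and the strong convexity of cross-entropy.
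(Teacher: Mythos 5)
Your proposal is correct and follows essentially the same three-stage architecture as the paper's proof: (i) existence via a concave--convex minimax theorem on products of probability simplices, (ii) characterizing the two updates as exact best responses with closed forms (Gibbs tilting of $p_{\text{ref}}$ for the generator, Bayes posterior for the classifier), and (iii) linear convergence by showing the composed best-response operator is a contraction for $\beta$ large and invoking Banach's fixed-point theorem. The paper's route differs only in technical choices: it cites Von Neumann's minimax theorem together with Brouwer's fixed-point theorem for existence rather than Sion's, and it carries out the contraction estimate in the $\ell_1$ norm on probability vectors rather than your proposed sup-norm on log-probabilities. You also correctly anticipate the one genuine obstruction — the potential blow-up of $-\log p_\theta$ near the boundary — and the paper resolves it exactly as you suggest, by imposing as a regularity condition a uniform lower bound $p_\theta(y\mid\tilde\xb)\ge\gamma>0$ (plus analogous lower bounds $\delta$ on the generator's normalizer and $\alpha$ on the induced marginal of $\tilde\xb$), which together yield explicit Lipschitz constants $\alpha_1\alpha_2 = 4\,\delta^{-1}\beta^{-1}\gamma^{-1-\beta^{-1}}\alpha^{-1}$ that can be driven below one by taking $\beta$ large.
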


\textbf{Discussion.} The key observation is that our minimax game objective \eqref{eq:minmax} is concave in the $p_{\bphi}$ and convex in $p_{\btheta}$. By Von Neumann's Minimax Theorem, this observation indicates that this minimax game admits a Nash equilibrium. Furthermore, we find that the update objective of each player have the following solutions:
\begin{align*}
    &\text{For generator: } \\
    & p_{\boldsymbol{\theta}_{n+1}}(\tilde{\mathbf{x}} \mid \mathbf{x}, y) 
    \propto 
    p_{\text{ref}}(\tilde{\mathbf{x}} \mid \mathbf{x}, y) \\
    & \qquad \qquad \qquad \qquad \qquad \quad \exp\left( \beta^{-1} \left[-\log p_{\boldsymbol{\theta}_n}(y \mid \tilde{\mathbf{x}}) \right] \right),
     \\
    &\text{For classifier: }\\
    &p_{\theta_{n+1}}(y | \tilde{\xb}) \propto \int \rho(\xb, y)  p_{\phi_n}(\tilde{\xb} | \xb, y) d\xb.
\end{align*}
By careful calculation, it can be showed that such a update rule is indeed a contractive map on the distribution space, which, according to the Banach's Fixed Point Theorem, leads to the convergence to the Nash equilibrium. The detailed proof is provided in Appendix~\ref{app:proof}.

\subsection{The Two-Player Game: Practical Algorithm}\label{sec:practical}
While our method is conceptually framed as the minimax game in \eqref{eq:minmax}, additional implementation details are introduced to ensure feasibility, efficiency, and performance. 
First, the generator produces $k$ new queries $\{\tilde{\xb}_{j}^{(i)}\}_{j=1}^k$ 
for a given input query $\xb^{(i)}$. To preserve the original label of the seed data, we use two distinct prompts $\bc_{y:y=\pm 1}$ for generating samples, based on whether the input is safe or unsafe: 
$\tilde{\xb}^{(i)} \sim p_{\bphi_{t-1}}(\tilde{\xb} | \xb^{(i)}, \bc_{y^{(i)}})$. 
We detail the prompts used for the generator in Appendix~\ref{app:exp}. 
The training data $\mathcal{S}^{(t)}$ at iteration $t$ is augmented exclusively with misclassified synthetic samples, defined as: $\mathcal{S}^{(t)} = \mathcal{S}^{(t-1)} \cup \tilde{\mathcal{S}}_{\text{mis}}$, where $\tilde{\mathcal{S}}_{\text{mis}} = \{\tilde{\xb}_j^{(i)}: \hat{y}_j^{(i)} \neq y^{(i)}\}$ and $\mathcal{S}^{(0)}=\mathcal{S}$.

To further enhance performance, we adopt a fine-grained multi-label classification setup similar to \citet{dubey2024llama}, where harmful inputs can have multiple labels (e.g., hate, violence), and safe content is labeled with all zeros. The classifier’s objective is modified to a multi-label classification loss using binary cross-entropy loss (equivalent to the negative log-likelihood minimization) for each of the 12 defined harmful classes (detailed in Appendix~\ref{app:data}): \vspace{-2mm}

\begin{align}
&L^{(t)}_{\mathcal{C}}(\btheta) = - \frac{1}{|\mathcal{S}^{(t)}|} \sum_{\left(\tilde{\xb},\{y_c\}\right) \in \mathcal{S}^{(t)}} \sum_{c=1}^{12} \nonumber\\
&\bigg[y_c \log p_{\btheta}(y_c | \tilde{\xb}) + (1 - y_{c}) \log (1 - p_{\btheta}(y_{c} | \tilde{\xb})) \bigg]. \label{eq:entropy_loss}
\end{align}

To maintain stability, we retrain the classifier from scratch at each iteration using the evolving dataset, similar to iterative approaches in mathematical reasoning~\citep{hosseini2024v}.

For the generator, the DPO training objective increases the likelihood of preferred data, which are samples that cause incorrect prediction of the classifier. Therefore, we consider the correctly classified ones as the dispreferred generation samples in preference learning. The correctly classified samples are defined as $\tilde{\mathcal{S}}_{\text{cor}} = \{\tilde{\xb}_j^{(i)} : \hat{y}_j^{(i)} = y^{(i)}\}$. 
The generator's loss is then given by:
\begin{align}
    &L^{(t)}_{\mathcal{G}}(\bphi, \bphi_{\text{ref}}) = \frac{1}{N}\sum_{\xb\in\mathcal{S}^{(t)}, \tilde{\xb}_w \in \tilde{\mathcal{S}}_{\text{mis}}, \tilde{\xb}_l \in \tilde{\mathcal{S}}_{\text{cor}}}\nonumber \\
    &\quad \bigg[\ell\bigg(\beta \log \frac{p_{\bphi}(\tilde{\xb}_w | \xb)}{p_{\bphi_{\text{ref}}}(\tilde{\xb}_w | \xb)} - \beta \log \frac{p_{\bphi}(\tilde{\xb}_l | \xb)}{p_{\bphi_{\text{ref}}}(\tilde{\xb}_l | \xb)}\bigg)\bigg],\label{eq:dpo_loss}
\end{align}
where $N < |\mathcal{S}^{(t)}|$ is the number of preference pairs that we were able to construct. We summarize the practical algorithm in Algorithm~\ref{alg:adversarial-training} and further technical details in Appendix~\ref{app:data_cur}. 

\begin{algorithm}[ht]
\caption{Two-Player Training}
\label{alg:adversarial-training}
\textbf{Require:} Initial generator $\mathcal{G}_{\bphi_0}$ and classifier $\mathcal{C}_{\btheta_0}$; maximum iteration $T$. \\
\textbf{Input:} Seed training dataset $\mathcal{S} = \{(\xb^{(i)}, y^{(i)})\}_{i=1}^N$. Prompt $\bc_{y=-1}$ and $\bc_{y=1}$.\\
\textbf{Output:} Final generator $\mathcal{G}_{\bphi_T}$ and classifier $\mathcal{C}_{\btheta_T}$.
\begin{algorithmic}[1]
\For{$t = 1, \dots, T$}
    \State \textbf{Sample Queries:}
    \For{$(\xb^{(i)}, y^{(i)}) \in \mathcal{S}$}
        \State Sample $\{\tilde{\xb}_j^{(i)}\}_{j=1}^k \sim p_{\bphi_{t-1}}(\tilde{\xb} | \bc_{y^{(i)}},\xb^{(i)})$ 
        \State Assign $\hat{y}_j^{(i)} = \mathcal{C}_{\btheta_{t-1}}(\tilde{\xb}_j^{(i)})$ 
        \State Partition into:
        \[
        \tilde{\mathcal{S}}_{\text{mis}}^{(i)} = \{\tilde{\xb}_j^{(i)} : \hat{y}_j^{(i)} \neq y^{(i)}\},\]
        \[\tilde{\mathcal{S}}_{\text{cor}}^{(i)} = \{\tilde{\xb}_j^{(i)} : \hat{y}_j^{(i)} = y^{(i)}\}
        \]
    \EndFor

    \State \textbf{Update Classifier according to \eqref{eq:entropy_loss}:} $\btheta_t \gets \argmin_{\btheta} L_{\mathcal{C}}(\btheta)$

    \State \textbf{Update Generator according to \eqref{eq:dpo_loss}:} $\bphi_t \gets \argmin_{\bphi} L_{\mathcal{G}}(\bphi, \bphi_{\text{ref}})$

\EndFor
\State \Return $\mathcal{C}_{\btheta_T}$
\end{algorithmic}
\end{algorithm}



\section{EXPERIMENTS}\label{sec:exp}
\textbf{Setup.}  
In our experiments, we use Qwen2.5-0.5B and Qwen2.5-1.5B~\citep{qwen2.5} as the base models for the classifier, since  the guardrail model is typically small-scale model and Qwen2.5-0.5B and Qwen2.5-1.5B models are among the most effective small-scale multilingual models available. In addition, we use dolphin-2.9.4-llama3.1-8b\footnote{https://huggingface.co/cognitivecomputations/dolphin-2.9.4-llama3.1-8b} as the base model for the generator, which is an uncensored multilingual model that meets our requirements for generating harmful queries in multiple languages. We follow the optimization process outlined in Section~\ref{sec:practical} and Algorithm~\ref{alg:adversarial-training} to train both models, applying full fine-tuning to the classifier and generator.  
For baselines, we compare against specialized guardrail models, including LlamaGuard3~\citep{inan2023llama} (1B) and ShieldGemma~\citep{zeng2024shieldgemma} (2B), which are SOTA models of similar scale to \ours. Additionally, we include larger-scale versions of LlamaGuard2 (8B) and LlamaGuard3 (8B) for a more comprehensive comparison. We detail the hyperparameters in Appendix~\ref{app:exp}. 
 

\textbf{Data.} To construct the seed dataset, we gather and combine training data from existing open-source data related to safety and toxicity, with detailed source information provided in Appendix~\ref{app:data}. We note that, instruction-following and QA data in sensitive domains (e.g., medical, legal, political) were also selected as benign examples containing potentially sensitive keywords. To prevent the classifier from relying on superficial keyword cues, we downsampled harmful examples dominated by specific terms. Harmful examples were further categorized into 12 groups, with an LLM assisting in labeling when category boundaries were ambiguous. Duplicate entries were removed to avoid overrepresentation, and the corpus was decontaminated to ensure no overlap with test data. 
The final linguistic composition of our gathered open-source dataset reveals a pronounced linguistic imbalance, where English data takes $81.4\%$ (1,679,516 instances), substantially predominating over French as $8.9\%$ (183,919), Spanish as $5.2\%$ (107,052), and German as $4.5\%$ (92,793). For generating the synthetic data, we set a temperature of 0.7 to encourage more diverse and creative generations and consider $k=8$.

\textbf{Evaluation.} We evaluate our method in four languages: English, French, German, and Spanish. For benchmarking guardrail models, we use six safety datasets: XSTest~\citep{rottger2023xstest}, ToxicChat~\citep{lin2023toxicchat}, OpenAI Moderation~\citep{markov2023holistic}, Beavertails~\citep{ji2024beavertails}, RTP-LX~\citep{de2024rtp}, and XSafety~\citep{wang2023all}. Among these, RTP-LX and XSafety are dedicated multilingual safety benchmarks, while the remaining four (XSTest, ToxicChat, OpenAI Moderation, and Beavertails) are commonly used English safety benchmarks. To enable multilingual evaluation, we translate these four datasets into languages that we considered.

\begin{table*}[!ht]
\vspace{-2mm}
    \centering
    \caption{Detailed F-1 scores on the classification benchmarks. The \textbf{bold} numbers indicate the best results among the methods evaluated and the \underline{underscored} numbers represent the second-best results. In the table, we abbreviate LlamaGuard as LG and ShieldGemma as SG.}
    \resizebox{\linewidth}{!}{%
\begin{tblr}{colspec = {cccccccccccccccc},
row{1-2, 8-9} = {bg=gray!25},
row{4, 6, 11, 13} = {bg=gray!10}
}
    \toprule 
    \SetCell[r=2]{c}{Model} & \SetCell[r=2]{c}{Size} $\downarrow$ & \SetCell[c=7]{c}{English} $\uparrow$ & & & & & & & \SetCell[c=7]{c}{German} $\uparrow$ & & & & & & \\
    \cmidrule[lr]{3-9} \cmidrule[lr]{10-16}   
    & & XSTest & OpenAI & ToxicC. & BeaverT. & RTP-LX & XSafety & \textbf{Avg} & XSTest & OpenAI & ToxicC. & BeaverT. & RTP-LX & XSafety & \textbf{Avg}\\
    \midrule
    LG3 & 1B & 43.4 & 36.8 & 22.3 & 51.6 & \underline{54.6} & \textbf{62.3} & 45.2 & 43.0 & 37.4 & 20.9 & 50.2 & \underline{55.4} & \textbf{61.4} & 44.7 \\
    SG & 2B & 69.4 & 44.8 & 36.4 & 51.6 & 26.0 & 30.6 & 43.1 & 59.6 & 38.7 & 27.5 & 51.6 & 19.5 & 24.1 & 36.8 \\
    LG2 & 8B & \textbf{88.8} & \underline{75.9} & 46.3 & \underline{72.3} & 39.5 & 35.2 & 59.7 & \underline{79.8} & \underline{74.4} & 40.5 & 68.5 & 38.7 & 30.6 & 55.4 \\
    LG3 & 8B & \underline{88.4} & \textbf{79.0} & \underline{54.0} & 70.1 & 48.5 & 40.5 & \underline{63.4} & \textbf{82.9} & \textbf{78.5} & \underline{48.0} & \underline{70.4} & 50.2 & 37.8 & \underline{61.3} \\
    \midrule
    Ours & \textbf{0.5B} & 82.3 & 70.8 & \textbf{70.1} & \textbf{86.1} & \textbf{91.7} & \underline{48.5} & \textbf{74.9} & 75.8 & 65.9 & \textbf{61.4} & \textbf{80.8} & \textbf{87.3} & \underline{60.4} & \textbf{71.9} \\
    \bottomrule
    \toprule 
    \SetCell[r=2]{c}{Model} & \SetCell[r=2]{c}{Size} $\downarrow$ & \SetCell[c=7]{c}{French} $\uparrow$ & & & & & & & \SetCell[c=7]{c}{Spanish} $\uparrow$ & & & & & & \\
    \cmidrule[lr]{3-9} \cmidrule[lr]{10-16}   
    & & XSTest & OpenAI & ToxicC. & BeaverT. & RTP-LX & XSafety & \textbf{Avg} & XSTest & OpenAI & ToxicC. & BeaverT. & RTP-LX & XSafety & \textbf{Avg}\\
    \midrule
    LG3 & 1B & 43.0 & 37.8 & 19.5 & 50.9 & \underline{54.9} & \textbf{61.3} & 44.6 & 46.9 & 37.9 & 20.4 & 50.3 & \underline{52.1} & \textbf{62.1} & 45.0 \\
    SG & 2B & 63.3 & 36.8 & 28.7 & 50.1 & 21.5 & 23.9 & 37.4 & 62.4 & 37.7 & 29.1 & 50.8 & 17.8 & 24.0 & 37.0 \\
    LG2 & 8B & \underline{81.6} & \underline{74.5} & 39.7 & 68.6 & 40.0 & 35.4 & 56.6 & \underline{84.0} & \underline{74.8} & 39.2 & 67.5 & 39.4 & 33.8 & 56.5 \\
    LG3 & 8B & \textbf{84.4} & \textbf{78.1} & \underline{50.1} & \underline{69.5} & 48.8 & 40.3 & 61.9 & \textbf{86.2} & \textbf{77.7} & \underline{48.4} & 69.5 & 48.4 & 39.0 & \underline{61.5} \\
    \midrule
    Ours & \textbf{0.5B} & 79.2 & 67.1 & \textbf{62.8} & \textbf{81.3} & \textbf{91.0} & \underline{54.7} & \textbf{72.7} & 81.4 & 66.8 & \textbf{64.9} & \textbf{81.4} & \textbf{88.0} & \underline{61.0} & \textbf{73.9}\\
    \bottomrule
    \end{tblr}%
    }
    \label{tab:main}
    \vspace{-4mm}
\end{table*}

\subsection{Main Results}
We present our main results in Figure~\ref{fig:main-res} and detail the performance on each dataset for each language in Table~\ref{tab:main}. \ours demonstrates significant advantages over existing guardrail models in both performance and efficiency. As shown in Figure~\ref{fig:main-res}, \ours achieves the highest average F1 score across English, French, Spanish, and German, outperforming all baselines, including the larger-scale LlamaGuard3 (8B) model, by over 10\%. Compared to models of similar scale, such as LlamaGuard3 (1B) and ShieldGemma (2B), \ours surpasses their performance by more than 30\% on average. Additionally, \ours exhibits the lowest inference cost (16.47 ms/input), achieving over a 2.5$\times$ speedup compared to LlamaGuard3 (8B) (58.88 ms/input) and ShieldGemma (2B) (57.83 ms/input). This highlights the efficiency of our approach, as it not only surpasses larger models in multilingual safety performance but also maintains significantly lower computational overhead, making it more practical for real-world deployment. In Figure~\ref{fig:perf_decline}, we present the average performance of each model across the three non-English languages relative to the English performance of our model \ours. Here, \ours achieves the lowest performance decline across all languages as compared to the English performance.

\begin{figure}[!ht]
    \centering
    \includegraphics[width=0.9\linewidth]{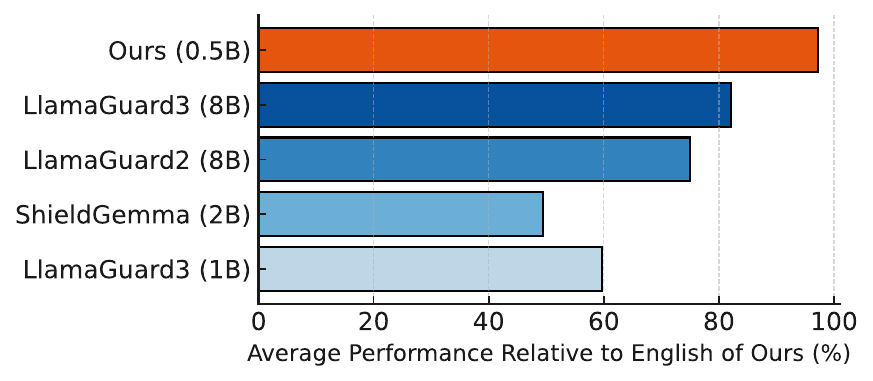}
    \caption{Relative performance decline (average F1 across 6 benchmarks and 3 languages) of various models compared to the En performance of \ours.}
    \label{fig:perf_decline}
\end{figure}

\begin{table}[!ht]
    \centering
    \caption{Average F1 scores across languages of different models trained with the dataset produced by our two-player scheme. Results generalize across base models and scales.}
    \label{tab:main-2}
    \resizebox{\linewidth}{!}{%
      \begin{tblr}{
          colspec={cccccc},
          row{1} = {bg=gray!25},
          row{3,5} = {bg=gray!10}
        }
        \toprule
        Model & Base & Size & En & Fr & Es & De\\
        \midrule
        LlamaGuard3 & Llama-3.2 & 1 B & 45.2 & 44.6 & 45.0 & 44.7 \\
        \ours & Llama-3.2 & 1B & \textbf{75.7} & \textbf{74.4} & \textbf{71.7} & \textbf{71.3} \\ 
        \midrule
        \ours        & Qwen-2.5  & 0.5 B & 74.9 & 71.9 & 72.7 & 73.9 \\
        \ours        & Qwen-2.5  & 1.5 B & \textbf{76.2} & \textbf{75.0} &
                       \textbf{73.7} & \textbf{74.0} \\
        \bottomrule
      \end{tblr}%
    }
\end{table}


\subsection{Weak-to-Strong Generalization}
Weak-to-strong generalization refers to the ability of a weaker model to generalize in supervising the training of stronger models. In Table~\ref{tab:main-2}, we leverage the training data generated by our two-player framework to train Llama-3.2 (1B), the base model for LlamaGuard3 (1B), and Qwen-2.5 (1.5B), a larger-scale model used to evaluate the weak-to-strong generalization capabilities of our method. 
We draw the following observations: (1) While the final fine-tuning results vary across base models, the data generated by our framework generalizes effectively across architectures, consistently outperforming baselines trained on the same base model by more than 20\%. (2) The two-player framework demonstrates weak-to-strong generalization, as data generated with the 0.5B classifier significantly improves the performance of the 1.5B classifier.

\section{ABLATION STUDY}
\subsection{Seed Data}
\textbf{Benefit of Incorporating Multilingual Data.} 
We evaluate three training configurations using only the seed dataset: training on English data alone, training on English and French data, and training on all four languages. Figure~\ref{fig:training_heatmap} presents the F1 scores on the OpenAI moderation test set for models trained under these conditions, all based on the \texttt{Qwen2.5-0.5B} model.  
Interestingly, training exclusively on English provides a relatively strong foundation for performance on French but is weaker on Spanish and German. Incorporating French data significantly improves performance on the French-translated OpenAI test set (from 51.3 to 65.2) while also enhancing performance on the Spanish- and German-translated test sets by 7.4 and 12.9 points, respectively. Additionally, English and French data appear to be mutually beneficial. 
The inclusion of Spanish and German data further improves performance on their respective test sets. However, as their addition reduces the proportion of English and French data, it leads to a slight performance decline overall.


\begin{figure}[!ht]
  \centering
  \begin{minipage}[t]{0.47\textwidth}
    \centering
    \includegraphics[width=0.8\linewidth]{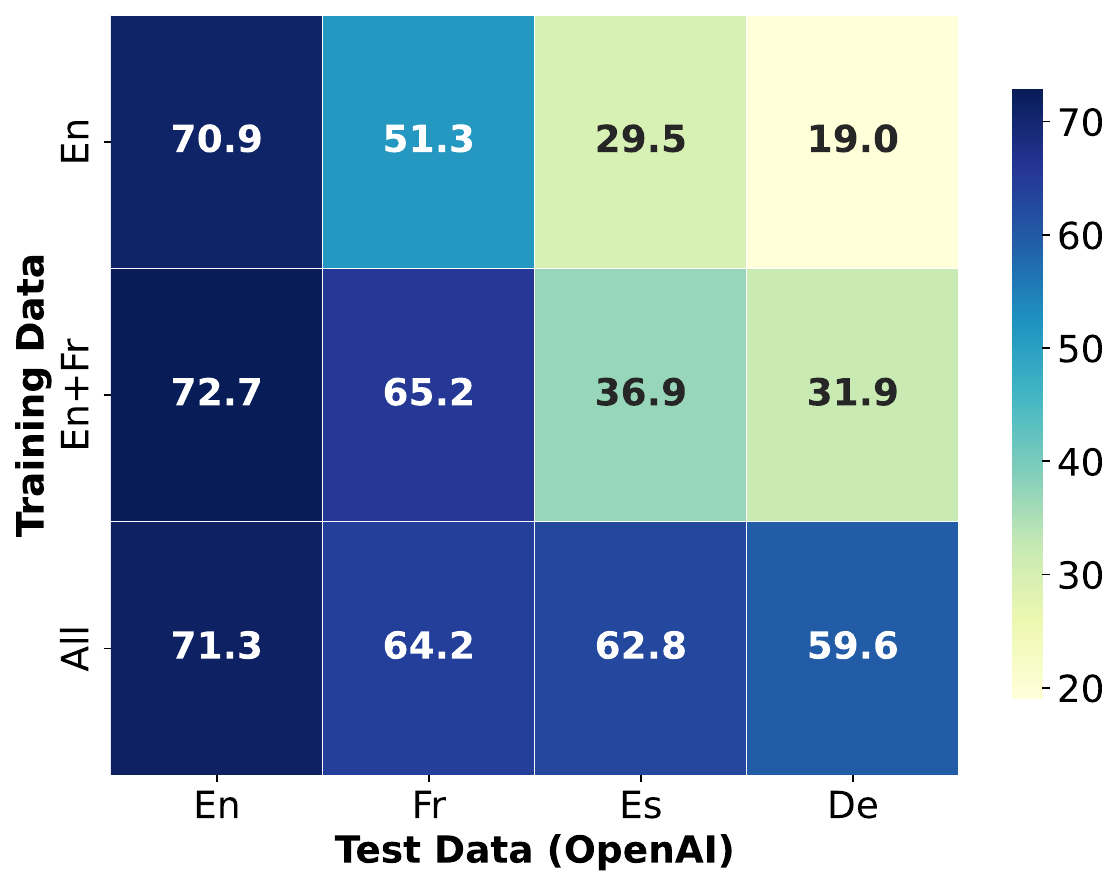}
    \caption{The F1 score on OpenAI benchmark of models trained with data containing different languages in our seed data. The inclusion of French in addition to English improves model performance on Spanish and German.}
    \label{fig:training_heatmap}
  \end{minipage}
  \hfill
  \begin{minipage}[t]{0.47\textwidth}
    \vspace{0pt}
    \centering
    \includegraphics[width=\linewidth]{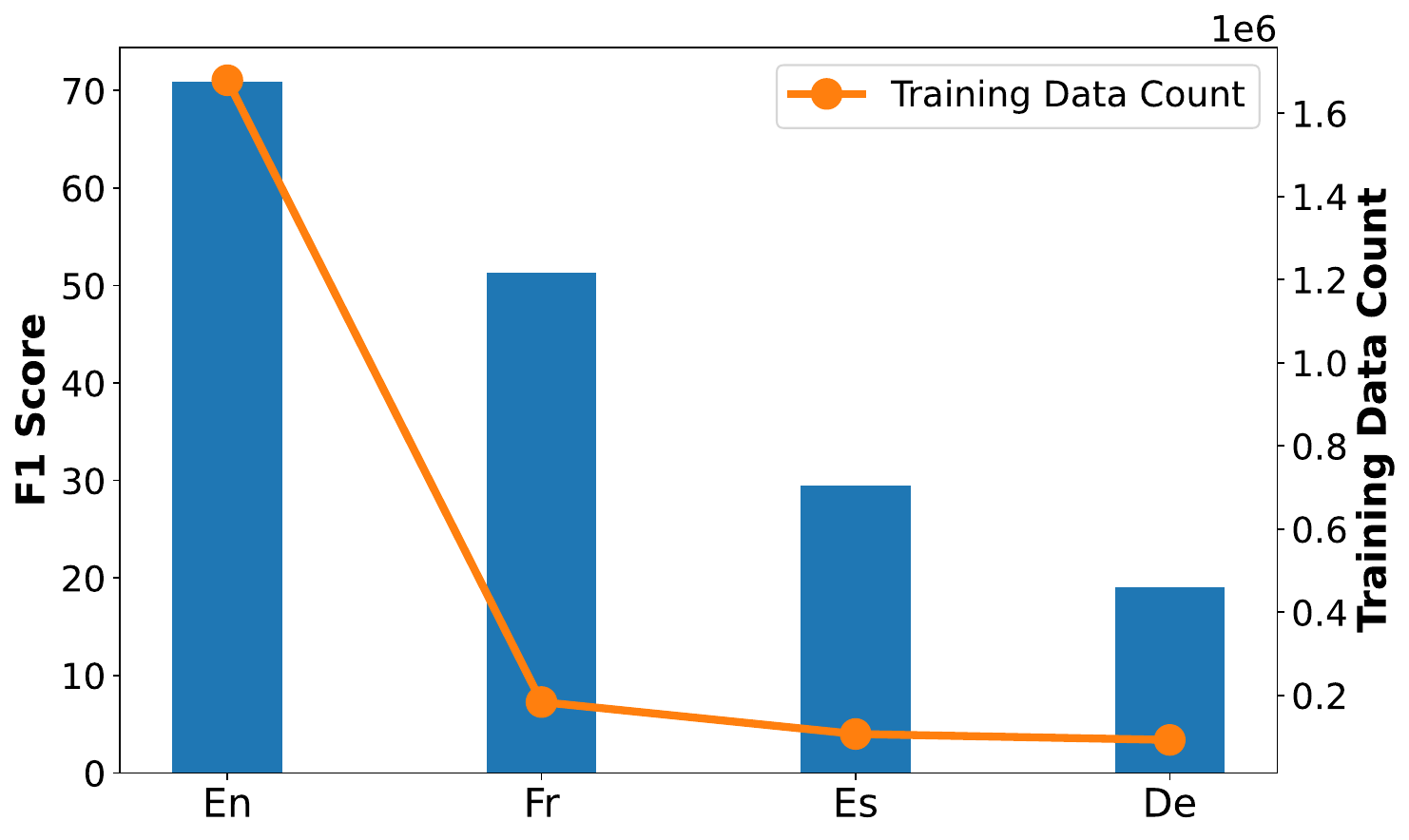}
    \caption{Performance  by languages of model trained on seed data. With larger data proportion in seed data, the model's average performance on English is markedly higher than other languages.}
    \label{fig:acc_prop}
  \end{minipage}
\end{figure}


\textbf{Performance Differences Due to Disproportionate Data.}  
Figure~\ref{fig:acc_prop} illustrates the relationship between training data volume per language and model performance (average F1 scores) across six benchmarks. The model is trained on the entire seed dataset, without synthetic data augmentation. The horizontal axis represents languages (English, French, Spanish, and German), while the left and right vertical axes indicate F1 scores and training data volume in the seed data, respectively.  
A clear trend emerges: languages with larger training datasets (e.g., English) achieve higher F1 scores, while those with less data (e.g., Spanish, German) perform worse. Although the performance gap varies across test sets, F1 scores consistently decline with reduced dataset size. This underscores the importance of synthetic data in mitigating performance disparities for low-resource languages.  
While the base LLM (Qwen-2.5 in our case) may have inherent limitations on low-resource languages, our method and the results of \ours demonstrate that incorporating synthetic multilingual data during post-training can significantly reduce this gap for the downstream task we consider.

\subsection{Synthetic Data}
\textbf{Iterative Improvement.} 
In Figure~\ref{fig:iterative}, we demonstrate the iterative improvement of the guardrail classifier in average F1 scores across English (En), French (Fr), Spanish (Es), and German (De) on the 6 benchmarks. Starting from iteration 0, which represents the baseline performance of training on seed data, substantial improvements are observed for all non-English languages after the first iteration. We particularly observe large gains in Spanish and German, highlighting the effectiveness of the iterative process in bridging performance gaps for lower-resource languages. By iteration 2, the performance for all languages converges, with Spanish and German achieving scores comparable to French, and all non-English languages narrowing the gap with English. In Figure~\ref{fig:data-dist}, we further show the data proportion across languages for iteration 0 (seed data) and synthetic data generated at iteration 1. At iteration 0, English dominates with 81\% of the data, while other languages (French, German, and Spanish) collectively account for less than 20\%. At iteration 1, the distribution for synthetic balances with the seed data, with English decreasing to 13\%, and significant increases in French (27\%), German (35\%), and Spanish (24\%). 

\begin{figure}[!ht]
    \centering
    \vspace{-7mm}
    \subfigure[]{
        \includegraphics[width=0.41\linewidth]{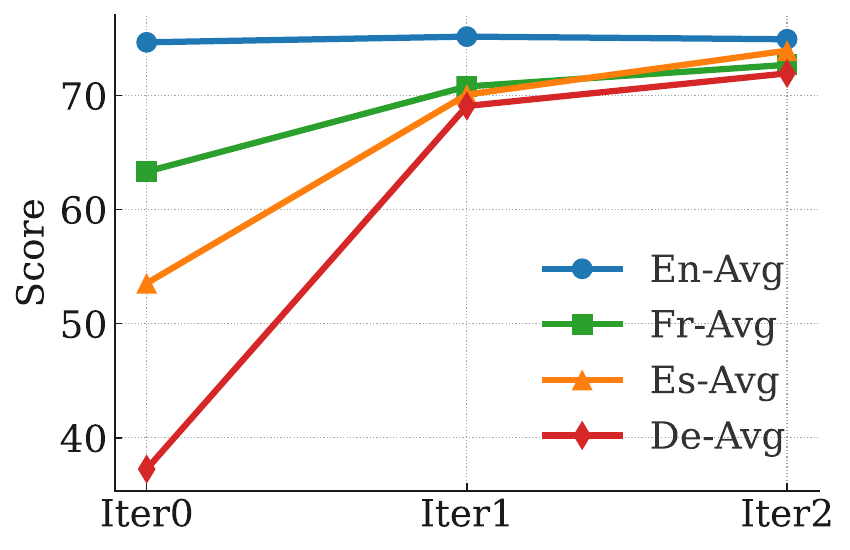}
        \label{fig:iterative}
    }
    \hfill
    \subfigure[]{
        \includegraphics[width=0.41\linewidth]{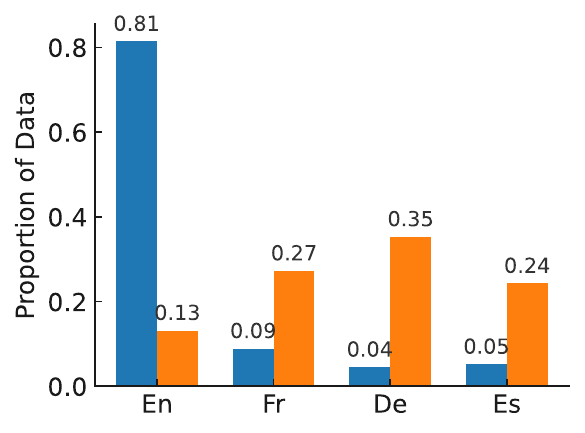}
        \label{fig:data-dist}
    }
    \caption{(a) Iterative performance improvements of \ours. (b) Shift in data distribution across languages over iterations.}
    \label{fig:combined}
\end{figure}

\section{CONCLUSION}\label{sec:conclude}

Our work addresses the data scarcity challenge in multilingual LLM safety through a novel self-improving framework that integrates synthetic data generation with guardrail model training. Specifically, we propose a two-player reinforcement learning approach formalized as a minimax game, providing theoretical guarantees of convergence. Empirical evaluations across six languages demonstrate that our model outperforms similarly-sized baselines by over 20\%, and larger models by 10\%, while maintaining a compact 0.5B parameter size and achieving a 3$\times$ inference speedup compared to existing guardrails.

\textbf{Limitations.} We note that synthetic data generation methods inherently depend on the quality of the underlying LLM, with stronger models naturally yielding superior outcomes. 
Modern LLMs, such as Qwen-2.5, already support over 29 languages, highlighting the potential for generating robust multilingual post-training datasets. While our current implementation focuses specifically on English, French, German, and Spanish to illustrate the efficacy of our two-player data synthesis framework, the approach inherently retains the full multilingual capacity of Qwen-2.5, thereby supporting extensive future expansions to additional languages.

\section*{Acknowledgements}
This work was partially supported by NIH U54OD036472, U54DK097771, U54HG012517, NSF 2312501, 2106859, Amazon, NEC, Optum.

\bibliography{main}
\bibliographystyle{plainnat}

\section*{Checklist}

\begin{enumerate}

  \item For all models and algorithms presented, check if you include:
  \begin{enumerate}
    \item A clear description of the mathematical setting, assumptions, algorithm, and/or model. [Yes]
    \item An analysis of the properties and complexity (time, space, sample size) of any algorithm. [Not Applicable]
    \item (Optional) Anonymized source code, with specification of all dependencies, including external libraries. [Yes]
  \end{enumerate}

  \item For any theoretical claim, check if you include:
  \begin{enumerate}
    \item Statements of the full set of assumptions of all theoretical results. [Yes]
    \item Complete proofs of all theoretical results. [Yes]
    \item Clear explanations of any assumptions. [Yes]     
  \end{enumerate}

  \item For all figures and tables that present empirical results, check if you include:
  \begin{enumerate}
    \item The code, data, and instructions needed to reproduce the main experimental results (either in the supplemental material or as a URL). [Yes]
    \item All the training details (e.g., data splits, hyperparameters, how they were chosen). [Yes]
    \item A clear definition of the specific measure or statistics and error bars (e.g., with respect to the random seed after running experiments multiple times). [Not Applicable]
    \item A description of the computing infrastructure used. (e.g., type of GPUs, internal cluster, or cloud provider). [Yes]
  \end{enumerate}

  \item If you are using existing assets (e.g., code, data, models) or curating/releasing new assets, check if you include:
  \begin{enumerate}
    \item Citations of the creator If your work uses existing assets. [Yes]
    \item The license information of the assets, if applicable. [Not Applicable]
    \item New assets either in the supplemental material or as a URL, if applicable. [Not Applicable]
    \item Information about consent from data providers/curators. [Yes]
    \item Discussion of sensible content if applicable, e.g., personally identifiable information or offensive content. [Yes]
  \end{enumerate}

  \item If you used crowdsourcing or conducted research with human subjects, check if you include:
  \begin{enumerate}
    \item The full text of instructions given to participants and screenshots. [Not Applicable]
    \item Descriptions of potential participant risks, with links to Institutional Review Board (IRB) approvals if applicable. [Not Applicable]
    \item The estimated hourly wage paid to participants and the total amount spent on participant compensation. [Not Applicable]
  \end{enumerate}

\end{enumerate}

\clearpage
\appendix
\thispagestyle{empty}

\onecolumn
\aistatstitle{Appendix}
\section{Theoretical Analysis}\label{app:proof}
In this section, we provide a detailed theoretical analysis about our two-player minimax game framework.
\subsection{Minimizer of Loss}
First of all, we derive the solution of the optimization objectives defined in Equations \eqref{eq:update_classifier} and \eqref{eq:population_dpo}.
\subsubsection{Generator}
Recall that the corresponding RL optimization objective of DPO objective \eqref{eq:population_dpo} is:
\begin{align}
    \EE_{(\xb,y)\sim\rho(\xb,y)}  \big[ \EE_{\tilde{\xb}\sim p_{\bphi}(\tilde{\xb} | \xb, y)}[r_t(\xb, \tilde{\xb})] - \beta D_{\text{KL}}(p_{\bphi} | p_{\text{ref}}) \big],\label{eq:ppo}
\end{align}
where $\rho(\xb,y)$ is the data distribution and $r_t((\xb,y), \tilde{\xb}) = - \log p_{\btheta_t}(y|\tilde{\xb})$ is the reward function defined in \eqref{eq:reward}. We will show that the DPO objective \eqref{eq:population_dpo} and the KL-regularized reward maximization objective~\ref{eq:ppo} shares the same minimizer. \citet{azar2024general} provided the following connection between the RL and DPO objectives.
\begin{proposition}[Proposition 4 in \citet{azar2024general}]\label{prop:same_minimizer}
Let the DPO training objective be 
\begin{align*}
    L_1(\bphi, \bphi_{\text{ref}}) = \EE_{\xb\sim\rho}\EE_{\yb_w, \yb_l \sim \mu(\cdot|\xb)} \bigg[ \PP(\yb_w \succ \yb_l | \xb) 
 \ell\bigg(\beta \log \frac{p_{\bphi}(\yb_w | \xb)}{p_{\bphi_{\text{ref}}}(\yb_w | \xb)} - \beta \log \frac{p_{\bphi}(\yb_l | \xb)}{p_{\bphi_{\text{ref}}}(\yb_l | \xb)}\bigg)\bigg],
\end{align*}
and the RLHF training objective be
\begin{align*}
    & L_2(\bphi,\bphi_{\text{ref}}) = \EE_{\xb\sim\rho(\xb)}{\EE_{\yb \sim p_{\bphi}(\cdot|\xb)}[r(\yb, \xb)]} - {\beta D_{\text{KL}}(p_{\bphi} | p_{\text{ref}})}.
\end{align*}
    Consider a preference model \( p^* \) such that there exists a minimizer to the Bradley-Terry loss
\[
\arg\min_r - \mathbb{E}_{\xb \sim \rho} \mathbb{E}_{\yb_w, \yb_l \sim \mu(\cdot | \xb)} \left[ p^*(\yb_w \succ \yb_l | \xb) \log \sigma(r(\xb, \yb_w) - r(\xb, \yb_l)) \right].
\]
Then, the optimal policy for the DPO objective and for the RLHF objective with the reward model given as the minimizer to the Bradley-Terry loss above are identical, regardless of whether or not \( p^* \) corresponds to a Bradley-Terry preference model.
\end{proposition}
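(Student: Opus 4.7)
My plan is to establish the Nash equilibrium via Sion's minimax theorem and then prove linear convergence of the alternating best-response dynamics via a KL-based potential function whose contraction rate is controlled by $\beta$. First I would fix notation: let $\rho(\xb,y)$ denote the seed distribution and let
\begin{align*}
    F(p_{\btheta},p_{\bphi}) = \mathbb{E}_{(\xb,y)\sim\rho,\,\tilde{\xb}\sim p_{\bphi}(\cdot|\xb,y)}\bigl[-\log p_{\btheta}(y|\tilde{\xb})\bigr] - \beta\, D_{\text{KL}}(p_{\bphi}\,\|\,p_{\text{ref}})
\end{align*}
be the minimax objective, taken over conditional distributions on the token sequence space. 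On this convex, compact domain, $F$ is convex in $p_{\btheta}$ (since $-\log(\cdot)$ is convex and expectation preserves convexity) and concave in $p_{\bphi}$ (linear in $p_{\bphi}$ minus a convex KL term), and jointly continuous. Sion's theorem then produces a saddle point $(p_{\btheta^{\star}},p_{\bphi^{\star}})$, which is the claimed Nash equilibrium.

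Second, I would derive the two best responses in closed form. Taking the functional derivative of $F$ in $p_{\bphi}$ with a Lagrange multiplier for normalization yields the exponentially tilted policy
\begin{align*}
    p_{\bphi}^{\star}(\tilde{\xb}|\xb,y)\ \propto\ p_{\text{ref}}(\tilde{\xb}|\xb,y)\,\cdot\, p_{\btheta}(y|\tilde{\xb})^{-1/\beta},
\end{align*}
while minimization in $p_{\btheta}$ gives the Bayes-optimal posterior $p_{\btheta}^{\star}(y|\tilde{\xb})$ induced by $\rho$ and $p_{\bphi}$. A Nash equilibrium is exactly a simultaneous fixed point of these two maps. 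Proposition~\ref{prop:same_minimizer} ensures that the DPO update in~\eqref{eq:population_dpo} shares the RL minimizer, so the algorithm genuinely implements alternating best responses.

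Third, for linear convergence I would use the potential $\Psi_t = D_{\text{KL}}(p_{\bphi^{\star}}\,\|\,p_{\bphi_t}) + \alpha\, \mathbb{E}_{\tilde{\xb}}\bigl[D_{\text{KL}}(p_{\btheta^{\star}}(\cdot|\tilde{\xb})\,\|\,p_{\btheta_t}(\cdot|\tilde{\xb}))\bigr]$ for a constant $\alpha>0$ tuned to $\beta$. The generator update is KL-regularized policy improvement, and a three-point (Pythagorean) identity for relative entropy applied to the exponential-family update gives $\beta$-strong-concavity, yielding a one-step contraction of the first summand by a factor of the form $(1+c\beta)^{-1}$ where $c$ is tied to the reward range. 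The classifier step exactly recovers the Bayes posterior under $p_{\bphi_t}$, so the second summand reduces to bounding how much that posterior shifts when $p_{\bphi}$ changes; this shift is Lipschitz in total variation and thus, via Pinsker, in the KL between $p_{\bphi_t}$ and $p_{\bphi^{\star}}$. Composing the two contractions and choosing $\alpha$ so the cross-terms are absorbed yields $\Psi_{t+1}\le \gamma(\beta)\,\Psi_t$ with $\gamma(\beta)<1$, i.e.\ linear convergence.

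The main obstacle is controlling this cross-coupling: the generator's ideal target $p_{\bphi}^{\star}(\btheta)$ is roughly $1/\beta$-Lipschitz in $\btheta$ (hence very sensitive for small $\beta$), while the stabilizing effect of the KL penalty scales like $\beta$, so only once $\beta$ exceeds a problem-dependent threshold does the composite best-response operator become a strict contraction; making this threshold explicit is where the delicate calculation sits. A secondary subtlety I would flag is that the argument naturally lives in the functional/tabular regime, so its application to the parametric families $p_{\btheta}, p_{\bphi}$ implicitly assumes realizability of the best responses; without this one obtains convergence only to a neighborhood of the equilibrium whose radius is governed by the approximation error of the function classes.
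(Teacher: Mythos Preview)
Your proposal does not address the stated proposition at all. Proposition~\ref{prop:same_minimizer} asserts only that the DPO objective and the KL-regularized RLHF objective share the same optimal policy (under the stated Bradley--Terry hypothesis); it says nothing about Nash equilibria, minimax structure, or convergence of any iterative scheme. What you have written is an outline for Theorem~\ref{thm:main}, and in fact you invoke Proposition~\ref{prop:same_minimizer} as a \emph{tool} (``Proposition~\ref{prop:same_minimizer} ensures that the DPO update in~\eqref{eq:population_dpo} shares the RL minimizer'') rather than as the object to be proved. The paper does not supply its own proof of this proposition either: it is quoted verbatim as Proposition~4 of Azar et al.\ and used as a black box, together with Lemma~\ref{lm:minimize_r}, to identify the DPO minimizer with the PPO minimizer. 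So relative to the stated target your proposal is simply off-topic.

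If your intended target was actually Theorem~\ref{thm:main}, then your approach diverges from the paper's in the convergence half. The paper establishes existence via Von Neumann's minimax theorem and Brouwer's fixed point theorem, then proves linear convergence by showing that the best-response maps $T_\phi$ and $T_\theta$ are Lipschitz in the $\ell_1$ norm (with explicit constants $2\delta^{-1}\beta^{-1}\gamma^{-1-\beta^{-1}}|\mathcal{X}|$ and $2\alpha^{-1}|\mathcal{X}|^{-1}$), so that $T^2$ is a strict contraction when $\beta$ is large, and applies Banach's fixed point theorem. Your route via a KL potential and Pythagorean identities is a plausible alternative, but the paper's direct Lipschitz calculation sidesteps the cross-coupling bookkeeping you flag as the main obstacle, at the cost of requiring the additional regularity assumptions ($p_\theta\ge\gamma$, non-degenerate normalizers, finite token spaces) that make the $\ell_1$ bounds finite.
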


Therefore, we only need to show that the reward function is the minimizer of the Bradley-Terry loss.
\begin{lemma}\label{lm:minimize_r}
Let $\sigma$ be the sigmoid function and $p^* (\tilde{\xb}_{w} \succ \tilde{\xb}_{l} |\xb,y ) = \sigma\bigl(r^*((\xb,y),\tilde{\xb}_w)-r^*((\xb,y),\tilde{\xb}_l)\bigr) $. Then, we have
\begin{align*}
    \argmin_{r} \mathbb{E}_{\substack{(\xb,y) \sim \rho(\xb,y) \\ \tilde{\xb}_w, \tilde{\xb}_l \sim p_{\phi_n}(\cdot | \xb, y)}} \bigg[ -p^* (\tilde{\xb}_{w} \succ \tilde{\xb}_{l} |\xb,y ) \log\sigma\big(r((\xb,y) \tilde{\xb}_w) - r((\xb,y), \tilde{\xb}_l)\big) \bigg] = r^*((\xb,y), \tilde{\xb}) + c(\xb,y).
\end{align*}
\end{lemma}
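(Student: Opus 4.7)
The plan is to reduce this to the standard Bradley-Terry identifiability argument: under a BT preference model, the minimizer of the induced log-loss recovers the true reward up to an additive function of the context, because the loss only depends on reward differences. The first step is to exploit the fact that $\tilde{\xb}_w$ and $\tilde{\xb}_l$ are sampled i.i.d. from $p_{\bphi_n}(\cdot|\xb,y)$, so the pair is exchangeable under the expectation. Swapping indices and averaging the two equal forms of the objective symmetrizes the one-sided loss into the standard Bradley-Terry log-loss
\begin{align*}
\mathcal{L}(r) = \tfrac{1}{2}\,\mathbb{E}_{(\xb,y)\sim\rho}\,\mathbb{E}_{\tilde{\xb}_w,\tilde{\xb}_l\sim p_{\bphi_n}(\cdot|\xb,y)}\Big[-\sigma(\Delta r^*)\log\sigma(\Delta r) - \sigma(-\Delta r^*)\log\sigma(-\Delta r)\Big],
\end{align*}
where $\Delta r = r((\xb,y),\tilde{\xb}_w) - r((\xb,y),\tilde{\xb}_l)$ and $\Delta r^*$ is defined analogously using $r^*$.

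Next, I would minimize pointwise. For any fixed $(\xb,y)$ and any fixed pair $(\tilde{\xb}_w,\tilde{\xb}_l)$, the integrand is the cross-entropy between the Bernoulli distribution with parameter $\sigma(\Delta r^*)$ and the Bernoulli with parameter $\sigma(\Delta r)$. Decomposing this cross-entropy as the (constant-in-$r$) Bernoulli entropy plus a KL divergence, and using that KL is nonnegative and vanishes iff the two parameters coincide, shows that the pointwise minimum is attained exactly when $\sigma(\Delta r) = \sigma(\Delta r^*)$, equivalently $\Delta r = \Delta r^*$. Since this holds at every pair in the support of $p_{\bphi_n}(\cdot|\xb,y)$, any $r$ that matches $r^*$ in differences attains the pointwise minimum everywhere and therefore also minimizes the expectation.

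The final step is to translate the identity $\Delta r = \Delta r^*$, which holds for (almost) every pair $\tilde{\xb}_w,\tilde{\xb}_l$ drawn for a fixed $(\xb,y)$, into the stated form. Fixing one argument as a reference point $\tilde{\xb}_0$ and varying the other yields
\begin{align*}
r((\xb,y),\tilde{\xb}) - r^*((\xb,y),\tilde{\xb}) = r((\xb,y),\tilde{\xb}_0) - r^*((\xb,y),\tilde{\xb}_0) =: c(\xb,y),
\end{align*}
which is precisely the claimed parameterization $r((\xb,y),\tilde{\xb}) = r^*((\xb,y),\tilde{\xb}) + c(\xb,y)$. Conversely, any such $r$ gives $\Delta r = \Delta r^*$, hence attains the pointwise minimum and belongs to the argmin.

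The main subtlety, rather than a real obstacle, is the identifiability gap: the loss depends on $r$ only through pairwise differences, so uniqueness can only be claimed up to a gauge $c(\xb,y)$, and this gauge is precisely what appears in the conclusion. A minor technical point worth flagging is that the argument only pins down $r$ on the support of $p_{\bphi_n}(\cdot|\xb,y)$; on the complement, $r$ is unconstrained and the claimed form can be extended arbitrarily, so the statement should be read modulo the support of the sampling distribution. This is standard for results of this type and does not affect its downstream use together with Proposition~\ref{prop:same_minimizer} to identify the DPO and PPO optima.
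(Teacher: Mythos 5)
Your proposal is correct and follows essentially the same route as the paper: recognize the objective as a cross-entropy, minimize pointwise via nonnegativity of KL, use strict monotonicity of $\sigma$ to conclude $\Delta r = \Delta r^*$, and then read off the gauge freedom $c(\xb,y)$. The one place you add genuine value is the explicit symmetrization step: the objective as written is only the $-p^*\log\sigma(\Delta r)$ half of the Bernoulli cross-entropy, and it is only after invoking exchangeability of $\tilde{\xb}_w,\tilde{\xb}_l\sim p_{\phi_n}(\cdot|\xb,y)$ and averaging the swapped form that one actually obtains the full cross-entropy; the paper asserts the cross-entropy interpretation directly without flagging this. Your caveat that $r$ is pinned down only on the support of $p_{\phi_n}(\cdot|\xb,y)$ is likewise correct and left implicit in the paper.
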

\begin{proof}[Proof of Lemma~\ref{lm:minimize_r}]
The objective can be viewed as a cross-entropy between the distribution \(p^*(\tilde{\xb}_w \succ \tilde{\xb}_l \mid \xb, y)\) and \(\sigma\bigl(r((\xb,y),\tilde{\xb}_w)-r((\xb,y),\tilde{\xb}_l)\bigr)\). In particular, the objective depends only on the difference \(r((\xb,y), \tilde{\xb}_w) - r((\xb,y), \tilde{\xb}_l)\). Hence the value of the objective doesn't change if we replace \(r\) by $\tilde{r}((\xb,y), \tilde{\xb})\;=\;r((\xb,y), \tilde{\xb}) + c(\xb,y)$. The function \(p^* (\tilde{\xb}_{w} \succ \tilde{\xb}_{l} |\xb,y )\) is given by the sigmoid 
\[
p^* (\tilde{\xb}_{w} \succ \tilde{\xb}_{l} |\xb,y ) \;=\;\sigma\bigl(r^*((\xb,y), \tilde{\xb}_w) - r^*((\xb,y), \tilde{\xb}_l))\bigr).
\]
Minimizing the cross-entropy is achieved exactly when
\[
\sigma\bigl(r((\xb,y) \tilde{\xb}_w) - r((\xb,y) \tilde{\xb}_l)\bigr)
\;=\;
\sigma\bigl(r^*((\xb,y), \tilde{\xb}_w) - r^*((\xb,y), \tilde{\xb}_l)\bigr)
\]
for all \(\xb, \tilde{\xb}_w, \tilde{\xb}_l, y\). Since the sigmoid is strictly increasing, we have
\[
r((\xb,y), \tilde{\xb}_w) - r((\xb,y), \tilde{\xb}_l) \;=\; r^*((\xb,y), \tilde{\xb}_w) - r^*((\xb,y), \tilde{\xb}_l).
\]
The solution is 
\[
r((\xb,y), \tilde{\xb}) \;=\; r^*((\xb,y), \tilde{\xb}) + c(\xb,y).
\]
\end{proof}

Then, by Proposition~\ref{prop:same_minimizer} and Lemma~\ref{lm:minimize_r}, the DPO objective \eqref{eq:population_dpo} shares the same minimizer with its corresponding RL training objective~\eqref{eq:ppo}. In addition, according to \citet{rafailov2023direct}, the minimizer is
\begin{align*}
    p_{\phi_{n+1}}(\tilde{\xb}|\xb,y) = \frac{1}{Z(\xb,y)}{p_{\text{ref}}(\tilde{\xb}|\xb,y) \exp(\beta^{-1} [-\log p_{\theta_{n}}(y|\tilde{\xb})] )} \propto  p_{\text{ref}}(\tilde{\xb}|\xb,y) \exp(\beta^{-1} [-\log p_{\theta_{n}}(y|\tilde{\xb})] ), 
\end{align*}
where $Z(\xb,y) = \mathbb{E}_{\tilde{\xb}\sim p_{\text{ref}}(\tilde{\xb} |\xb,y)} \exp(\beta^{-1} [-\log p_{\theta_{n}}(y|\tilde{\xb})] )$ is the normalization term.
\subsubsection{Classifier}
Next, we will derive the solution to the objective~\eqref{eq:update_classifier}. We first prove a tool lemma.
\begin{lemma}\label{lm:mle}
    Let $p(y,\tilde{\xb})$ be a joint distribution over $(y,\tilde{\xb})$. Then
    \begin{align*}
        \max_q \EE_{(y,\tilde{\xb}) \sim p(y,\tilde{\xb})} \big[ \log q(y | \tilde{\xb}) \big] = - H[p(y | \tilde{\xb})],
    \end{align*}
    and the maximizer is $q^*(y | \tilde{\xb}) = p(y | \tilde{\xb}) $. Here, H is the entropy.
\end{lemma}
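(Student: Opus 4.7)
The plan is to rewrite the objective as the sum of the (negated) conditional entropy of $p(y|\tilde{\xb})$ and a nonpositive Kullback--Leibler term, and then invoke Gibbs' inequality to conclude both the value of the maximum and the identity of the maximizer. This is the standard ``cross-entropy equals entropy plus KL'' decomposition specialized to conditional distributions.

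First, I would decompose the joint expectation via the tower property as $\EE_{(y,\tilde{\xb})\sim p}[\log q(y|\tilde{\xb})] = \EE_{\tilde{\xb}\sim p(\tilde{\xb})}\big[\EE_{y\sim p(y|\tilde{\xb})}[\log q(y|\tilde{\xb})]\big]$. Inside the inner expectation I would add and subtract $\log p(y|\tilde{\xb})$, yielding
\begin{align*}
\EE_{y\sim p(y|\tilde{\xb})}[\log q(y|\tilde{\xb})] = \EE_{y\sim p(y|\tilde{\xb})}[\log p(y|\tilde{\xb})] - D_{\text{KL}}\big(p(\cdot|\tilde{\xb})\,\big\|\,q(\cdot|\tilde{\xb})\big).
\end{align*}
The first term, averaged over $\tilde{\xb}$, is $-H[p(y|\tilde{\xb})]$ by the definition of the conditional entropy implicitly used in the lemma statement. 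The second, averaged over $\tilde{\xb}$, is the conditional KL divergence $\EE_{\tilde{\xb}}[D_{\text{KL}}(p(\cdot|\tilde{\xb})\|q(\cdot|\tilde{\xb}))] \ge 0$, with equality iff $q(\cdot|\tilde{\xb}) = p(\cdot|\tilde{\xb})$ for $p(\tilde{\xb})$-almost every $\tilde{\xb}$, by Gibbs' inequality (itself a consequence of Jensen applied to the concave function $\log$).

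Combining these gives $\EE_{(y,\tilde{\xb})\sim p}[\log q(y|\tilde{\xb})] = -H[p(y|\tilde{\xb})] - \EE_{\tilde{\xb}}[D_{\text{KL}}(p(\cdot|\tilde{\xb})\|q(\cdot|\tilde{\xb}))] \le -H[p(y|\tilde{\xb})]$, with equality precisely when $q^*(y|\tilde{\xb}) = p(y|\tilde{\xb})$, which establishes both the value of the maximum and the identity of the maximizer.

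There is no genuine obstacle here; this is a textbook identity, and the only care needed is to keep the notation $H[p(y|\tilde{\xb})]$ unambiguous (namely, that it denotes the conditional entropy averaged over $\tilde{\xb}\sim p(\tilde{\xb})$ rather than the entropy at a single $\tilde{\xb}$) and to qualify the maximizer as unique only $p(\tilde{\xb})$-almost surely, since $q^*$ is not constrained off the support of the marginal $p(\tilde{\xb})$.
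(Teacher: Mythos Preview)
Your proposal is correct and mirrors the paper's own proof: both condition on $\tilde{\xb}$ via the tower property, rewrite the inner cross-entropy as conditional entropy minus a KL term, and invoke nonnegativity of KL to identify both the optimal value $-H[p(y|\tilde{\xb})]$ and the maximizer $q^*(y|\tilde{\xb})=p(y|\tilde{\xb})$. Your added remarks on the interpretation of $H[p(y|\tilde{\xb})]$ and the almost-sure uniqueness of the maximizer are welcome clarifications that the paper's version omits.
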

\begin{proof}[Proof of Lemma~\ref{lm:mle}]
\begin{align*}
    \begin{aligned}
        L(q) & = \EE_{(y,\tilde{\xb}) \sim p(y,\tilde{\xb})} \big[ \log q(y | \tilde{\xb}) \big] \\
        & = \EE_{p(\tilde{\xb})}  \big[ \EE_{(y|\tilde{\xb}) \sim p(y,\tilde{\xb})} \big[ \log q(y | \tilde{\xb}) \big]\big] \\
        & = \EE_{p(\tilde{\xb})}  \big[ \EE_{(y|\tilde{\xb}) \sim p(y,\tilde{\xb})} \big[ \log p(y | \tilde{\xb}) \big] - D_{\text{KL}} (p(y | \tilde{\xb}) || q(y | \tilde{\xb}))\big] \\
        & \le \EE_{p(\tilde{\xb})}  \big[ \EE_{(y|\tilde{\xb}) \sim p(y,\tilde{\xb})} \big[ \log p(y | \tilde{\xb}) \big] \big],
    \end{aligned}
\end{align*}
    and the last equity holds if and only if $p(y | \tilde{\xb}) =  q(y | \tilde{\xb})$.
\end{proof}
Then, we can calculate the minimizer of \eqref{eq:update_classifier}.
\begin{lemma}\label{lm:nll_minimizer}
    ${\int \rho(\xb, y)  p_{\phi_n}(\tilde{\xb} | \xb, y) d\xb} \Big{/}{\int \rho(\xb,y)  p_{\phi_n}(\tilde{\xb} | \xb, y) d\xb dy}$ is the minimizer to the following optimization problem:
    \begin{align*}
        \argmin_q  \mathbb{E}_{\substack{(\xb,y) \sim \rho(\xb,y) \\ \tilde{\xb} \sim p_{\phi_n}(\tilde{\xb}|\xb, y)}} \big[- \log q (y|\tilde{\xb}) \big].
    \end{align*}
\end{lemma}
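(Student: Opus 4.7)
The plan is to reduce this claim directly to Lemma~\ref{lm:mle} by re-expressing the objective as an expectation under a single joint distribution on $(y,\tilde{\xb})$, and then identifying the stated ratio as the Bayes posterior $p(y\mid \tilde{\xb})$ of that joint.

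First, I would introduce the joint density $p(\xb,y,\tilde{\xb}) := \rho(\xb,y)\, p_{\phi_n}(\tilde{\xb}\mid \xb,y)$ on the product space. The target objective can then be rewritten as
\begin{align*}
\EE_{(\xb,y)\sim\rho}\EE_{\tilde{\xb}\sim p_{\phi_n}(\cdot\mid\xb,y)}\big[-\log q(y\mid\tilde{\xb})\big]
= \EE_{(\xb,y,\tilde{\xb})\sim p}\big[-\log q(y\mid\tilde{\xb})\big].
\end{align*}
Because the integrand $-\log q(y\mid\tilde{\xb})$ does not depend on $\xb$, I can marginalize out $\xb$ inside the expectation, obtaining $\EE_{(y,\tilde{\xb})\sim \bar p}[-\log q(y\mid\tilde{\xb})]$, where $\bar p(y,\tilde{\xb}) = \int \rho(\xb,y)\, p_{\phi_n}(\tilde{\xb}\mid\xb,y)\, d\xb$. (Fubini applies under the mild integrability assumption implicit in the finiteness of the objective.)

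Next I apply Lemma~\ref{lm:mle} to $\bar p(y,\tilde{\xb})$: its unique maximizer of $\EE_{(y,\tilde{\xb})\sim\bar p}[\log q(y\mid\tilde{\xb})]$ (equivalently, the minimizer of the negative log-likelihood) is the conditional $q^{\star}(y\mid\tilde{\xb}) = \bar p(y\mid\tilde{\xb})$. Writing this conditional as a ratio of marginals,
\begin{align*}
\bar p(y\mid\tilde{\xb}) \;=\; \frac{\bar p(y,\tilde{\xb})}{\bar p(\tilde{\xb})}
\;=\; \frac{\int \rho(\xb,y)\, p_{\phi_n}(\tilde{\xb}\mid \xb,y)\, d\xb}{\int \rho(\xb,y)\, p_{\phi_n}(\tilde{\xb}\mid \xb,y)\, d\xb\, dy},
\end{align*}
which is exactly the expression in the lemma statement. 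This gives both existence and (pointwise $\bar p$-a.e.\ on $\tilde{\xb}$ with $\bar p(\tilde{\xb})>0$) uniqueness of the minimizer.

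This proof is essentially a one-step reduction, so I do not anticipate a major obstacle; the only subtlety is making sure Lemma~\ref{lm:mle} applies with $\tilde{\xb}$ playing the role of the conditioning variable (which it does, since the lemma is stated for an arbitrary joint distribution over $(y,\tilde{\xb})$) and being careful that the minimizer is characterized only on the support of $\bar p(\tilde{\xb})$—off-support values of $\tilde{\xb}$ do not affect the objective. A brief remark to that effect suffices to close the argument.
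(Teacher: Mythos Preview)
Your proposal is correct and follows essentially the same approach as the paper: both marginalize out $\xb$ to obtain the joint distribution $\bar p(y,\tilde{\xb})=\int \rho(\xb,y)\,p_{\phi_n}(\tilde{\xb}\mid\xb,y)\,d\xb$, then invoke Lemma~\ref{lm:mle} to identify the minimizer as the conditional $\bar p(y\mid\tilde{\xb})$, written as the claimed ratio. Your additional remarks on Fubini and off-support behavior are fine but not needed for the paper's level of rigor.
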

\begin{proof}[Proof of Lemma~\ref{lm:nll_minimizer}]
    The joint distribution of $(y,\tilde{\xb})$ is 
    \begin{align*}
        p(y,\tilde{\xb}) = \int \rho(\xb, y)  p_{\phi_n}(\tilde{\xb} | \xb, y) d\xb,
    \end{align*}
    and the marginal distribution of $\tilde \xb$ is
    \begin{align*}
        p(\tilde{\xb}) = \int \rho(\xb,y)  p_{\phi_n}(\tilde{\xb} | \xb, y) d\xb dy.
    \end{align*}
    We can restate the optimization problem as 
    \begin{align*}
        \argmax_{q} \EE_{(y,\tilde{\xb}) \sim p(y,\tilde{\xb})}  \big[\log q (y|\tilde{\xb}) \big].
    \end{align*} 
    By Lemma~\ref{lm:mle}, the solution is 
    \begin{align*}
        \begin{aligned}
            q (y|\tilde{\xb}) 
            = p(y | \tilde{\xb})
            = \frac{p(y , \tilde{\xb})}{p(\tilde{\xb})}
            = \frac{\int \rho(\xb, y)  p_{\phi_n}(\tilde{\xb} | \xb, y) d\xb}{\int \rho(\xb,y)  p_{\phi_n}(\tilde{\xb} | \xb, y) d\xb dy}.
        \end{aligned}
    \end{align*}
\end{proof}
Therefore, for the classifier, by Lemma~\ref{lm:nll_minimizer}, we have
\begin{align*}
    p_{\theta_{n+1}}(y | \tilde{\xb}) = \argmin_{q}   \mathbb{E}_{\substack{(\xb,y) \sim \rho(\xb,y) \\ \tilde{\xb} \sim p_{\phi_n}(\tilde{\xb} | \xb,y)}} [-\log q (y|\tilde{\xb})] = \frac{\int \rho(\xb, y)  p_{\phi_n}(\tilde{\xb} | \xb, y) d\xb}{\int \rho(\xb,y)  p_{\phi_n}(\tilde{\xb} | \xb, y) d\xb dy}.
\end{align*}
In a two player game perspective, $p_{\theta_{n+1}}$ can be viewed as the best response to $p_{\phi_{n}}$, and $p_{\phi_{n+1}}$ can be viewed as the best response to $p_{\theta_{n}}$.  For simplicity, we denote that $p_{\theta_{n+1}}= T_{\theta}(p_{\phi_n})$ and $p_{\phi_{n+1}}= T_{\phi}(p_{\theta_n})$.

\subsection{Nash Equilibrium}
\subsubsection{Existence of Nash Equilibrium}\label{app:existence}
In our two-player game framework, we indeed optimize the following minimax two player game:
\begin{align}
    & \min_\theta \max_\phi F(p_{\phi}, p_{\theta}) \nonumber \\
   F(p_{\phi}, p_{\theta}) =  \EE_{(\xb,y)\sim\rho(\xb,y)}  \big[ \EE_{\tilde{\xb}\sim p_{\bphi}(\tilde{\xb} | \xb, y)}&[- \log p_{\btheta_t}(y|\tilde{\xb})] - \beta D_{\text{KL}}(p_{\bphi}(\cdot|\xb,y) | p_{\text{ref}}(\cdot|\xb,y)) \big]. \label{eq:full_target}
\end{align}
We further enforce the following regularity conditions:
\begin{itemize}[leftmargin=*, topsep=0pt]
\item Both $\mathcal{X}$ and $\tilde{\mathcal{X}}$ are finite discrete sets of tokens, with $|\mathcal{X}| = X < \infty$ and $|\tilde{\mathcal{X}}| = \tilde{X} < \infty$. 
\item We constrain $p_\theta$ within a half-space of the Euclidean space, ensuring $p_\theta(y|\xb) \ge \gamma > 0$. 
\item The normalization term of the generator distribution is strictly positive:
    \[\sum_{\tilde\xb\in\tilde{\mathcal{X}}} p_{\text{ref}}(\tilde\xb|\xb,y) \exp\big(\beta^{-1} [-\log p_{\theta}(y|\tilde{\xb})] \big) \ge \delta > 0.\]
\item The distribution $p_\phi$ is non-degenerate, i.e., $ \sum_{y=\pm 1}\sum_{\xb \in \mathcal{X}} \rho(\xb, y) p_{\phi}(\tilde{\xb}|\xb,y) \ge \alpha >0$.
\end{itemize}

\begin{theorem}[Von Neumann's Minimax Theorem]
Let \( X \subseteq \mathbb{R}^n \) and \( Y \subseteq \mathbb{R}^m \) be {compact convex} sets. If \( f: X \times Y \to \mathbb{R} \) is a continuous function that is concave-convex, i.e.
\[
f(\cdot, y): X \to \mathbb{R} \text{ is } {\text{concave}} \text{ for every fixed } y \in Y, \text{ and}
\]
\[
f(x, \cdot): Y \to \mathbb{R} \text{ is } {\text{convex}} \text{ for every fixed } x \in X.
\]
Then we have that
\[
\max_{x \in X} \min_{y \in Y} f(x, y) = \min_{y \in Y} \max_{x \in X} f(x, y).
\]
\end{theorem}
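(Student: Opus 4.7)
}
My plan is to split the theorem into two parts: (i) existence of a Nash equilibrium, and (ii) linear convergence of the best-response iterates to that equilibrium. For part (i), I would apply Von Neumann's minimax theorem (stated at the end of the excerpt). The objective $F(p_{\bphi},p_{\btheta}) = \mathbb{E}_{\tilde{\xb}\sim p_{\bphi}}[-\log p_{\btheta}(y|\tilde{\xb})] - \beta D_{\mathrm{KL}}(p_{\bphi}\|p_{\mathrm{ref}})$ is linear-minus-convex in $p_{\bphi}$, hence concave in $p_{\bphi}$, and is convex in $p_{\btheta}$ because $-\log(\cdot)$ is convex. The feasible sets are probability simplices (compact and convex after mild restrictions, e.g.\ bounding the supports or restricting to distributions absolutely continuous with respect to $p_{\mathrm{ref}}$ with bounded relative density, which is automatic once $\beta>0$ since the KL penalty forces finite values). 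Applying the minimax theorem yields $\max_{p_{\bphi}}\min_{p_{\btheta}} F = \min_{p_{\btheta}}\max_{p_{\bphi}} F$, and by standard saddle-point arguments a Nash equilibrium $(p_{\bphi^\star},p_{\btheta^\star})$ exists.

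For part (ii), I would build on the closed-form best responses already derived in Lemmas~\ref{lm:minimize_r} and \ref{lm:nll_minimizer}. Define the operators $T_{\bphi}(p_{\btheta}) \propto p_{\mathrm{ref}}(\tilde{\xb}|\xb,y)\exp(-\beta^{-1}\log p_{\btheta}(y|\tilde{\xb}))$ and $T_{\btheta}(p_{\bphi})(y|\tilde{\xb}) = \int\rho(\xb,y)p_{\bphi}(\tilde{\xb}|\xb,y)\,d\xb \,\big/\,\int\rho(\xb,y')p_{\bphi}(\tilde{\xb}|\xb,y')\,d\xb\,dy'$. The fixed point of the composition $T := T_{\btheta}\circ T_{\bphi}$ coincides with the Nash equilibrium from part (i). The strategy is to show that, for $\beta$ larger than an explicit constant, $T$ is a contraction with rate $\kappa<1$ under a suitable metric (I would use total variation, or the $\log$-distance $\|\log p - \log q\|_\infty$ restricted to the support of $p_{\mathrm{ref}}$). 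Contraction plus Banach's fixed-point theorem then gives $\|p_{\btheta_{n+1}} - p_{\btheta^\star}\| \le \kappa\,\|p_{\btheta_n}-p_{\btheta^\star}\|$, which is exactly linear convergence.

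The key estimate to carry out is the Lipschitz behaviour of each operator. For $T_{\bphi}$, since it is a softmax-type map with temperature $\beta$, a perturbation $\log p_{\btheta}\mapsto \log p_{\btheta}+\eta$ produces a perturbation in $\log T_{\bphi}(p_{\btheta})$ of order $\beta^{-1}\|\eta\|_\infty$ (up to the normalizer), so $T_{\bphi}$ is $\beta^{-1}$-Lipschitz in the log-scale metric. For $T_{\btheta}$, writing it as a conditional expectation and using a coupling/Dobrushin-style argument bounds its Lipschitz constant by a data-dependent constant $L_\rho$ depending only on $\rho$ and the support structure. Composing, $T$ is $(L_\rho/\beta)$-Lipschitz, and choosing $\beta > L_\rho$ gives the required contraction with rate $\kappa = L_\rho/\beta$. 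The main obstacle will be establishing the Lipschitz estimate for $T_{\btheta}$: the nonlinearity comes from the normalization in the denominator, and controlling it requires a lower bound on the denominator (a mass condition on $p_{\bphi}$ that is preserved under iteration because of the KL penalty anchoring $p_{\bphi}$ to $p_{\mathrm{ref}}$). I would handle this by showing that, along the trajectory, all iterates stay in a sublevel set of $D_{\mathrm{KL}}(\cdot\|p_{\mathrm{ref}})$, which yields uniform two-sided density bounds and hence a uniform Lipschitz constant, closing the argument.
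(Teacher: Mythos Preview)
First, a mismatch: the displayed statement is Von Neumann's Minimax Theorem, which the paper states without proof as a classical tool. Your proposal does not prove Von Neumann; it is a proof plan for Theorem~\ref{thm:main}, the paper's main convergence result, and it \emph{invokes} Von Neumann as one ingredient. I evaluate it against the paper's proof of Theorem~\ref{thm:main}.

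Your high-level strategy matches the paper's: derive the closed-form best responses $T_{\bphi}$ and $T_{\btheta}$, bound their Lipschitz constants, compose to obtain a contraction, and apply Banach's fixed-point theorem. The differences are in execution. For existence, the paper does not rely on the saddle-point route you sketch; it applies Brouwer's fixed-point theorem directly to the joint best-response map $T(p_{\btheta},p_{\bphi})=(T_{\btheta}(p_{\bphi}),T_{\bphi}(p_{\btheta}))$ on a product of finite simplices, which is continuous on a compact convex set. For convergence, the paper works in the $\ell_1$ norm (not log-distance) and obtains the explicit Lipschitz constants $2\delta^{-1}\beta^{-1}\gamma^{-1-\beta^{-1}}|\mathcal{X}|$ for $T_{\bphi}$ and $2\alpha^{-1}|\mathcal{X}|^{-1}$ for $T_{\btheta}$, then shows $T^2$ is a contraction for $\beta$ large enough. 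Your claim that $T_{\bphi}$ is simply $\beta^{-1}$-Lipschitz oversimplifies: the normalizer contributes, and the paper's constant carries the extra $\gamma^{-1-\beta^{-1}}$ factor.

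The one genuine gap in your plan is how you obtain the denominator lower bounds. The paper does not derive them; it \emph{assumes} them as regularity conditions ($p_{\btheta}\ge\gamma$, partition function $\ge\delta$, marginal of $\tilde{\xb}$ at least $\alpha$). You instead propose to show the iterates remain in a $D_{\text{KL}}(\cdot\|p_{\text{ref}})$-sublevel set and extract pointwise density bounds from that. This is more ambitious but not closed as stated: a KL bound on $p_{\bphi}$ does not by itself give a uniform pointwise lower bound on $\sum_{\xb,y}\rho(\xb,y)p_{\bphi}(\tilde{\xb}|\xb,y)$, nor does it propagate to a lower bound on $p_{\btheta}=T_{\btheta}(p_{\bphi})$. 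On a finite support you could recover such bounds via Pinsker plus a minimum-mass argument, but that step would have to be made explicit; the paper sidesteps it entirely by assumption.
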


Our optimization target $F(p_{\phi}, p_{\theta})$ in \eqref{eq:full_target} is concave on $p_{\phi}$ since the first term is linear in $p_{\phi}$ and $D_{\text{KL}}(p_{\phi} || p_{\text{ref}})$ is convex in $p_{\phi}$. In addition, $F(p_{\phi}, p_{\theta})$ is convex in $p_\theta$ since $\log$ is a concave function. By Von Neumann's Minimax Theorem, we have 
\begin{align*}
    \min_{p_{\btheta}} \max_{p_{\bphi}} F(p_{\phi}, p_{\theta}) = \max_{p_{\bphi}} \min_{p_{\btheta}}  F(p_{\phi}, p_{\theta}).
\end{align*}
We denote this value by $v$. In addition, we let 
\begin{align*}
    p^*_{\btheta} \in \argmin \max_{p_{\bphi}} F(p_{\bphi}, p_{\theta}), \\
    p^*_{\bphi} \in \argmax \min_{p_{\btheta}} F(p_{\bphi}, p_{\btheta}).
\end{align*}
That is,
\begin{align*}
    \max_{p_{\bphi}} F(p_{\bphi}, p^*_{\btheta}) = v, \\
    \min_{p_{\btheta}} F(p^*_{\bphi}, p_{\btheta}) = v.
\end{align*}

Then, we have
\begin{align*}
    \forall p_{\bphi} \quad F(p_{\bphi}, p^*_{\btheta}) \le \max_{p_{\bphi}} F(p_{\bphi}, p^*_{\btheta}) = v \\
    \forall p_{\btheta} \quad F(p^*_{\bphi}, p_{\btheta}) \ge \min_{p_{\btheta}} F(p^*_{\bphi}, p_{\btheta}) = v
\end{align*}

In addition, we would have $F(p^*_{\bphi}, p^*_{\btheta})$ = $v$ since $v \le F(p^*_{\bphi}, p^*_{\btheta}) \le v$. Therefore, we have for $p^*_{\bphi}, p^*_{\btheta}$ that
\begin{align*}
    \forall p_{\bphi},p_{\btheta} \quad  F(p_{\bphi}, p^*_{\btheta}) \le F(p^*_{\bphi}, p^*_{\btheta}) \le F(p^*_{\bphi}, p_{\btheta}),
\end{align*}
which satisfies the definition of Nash equilibrium.

\subsubsection{Convergence to Nash Equilibrium}\label{app:convergence}
In this section, we first show that both $T_\theta$ and $T_\phi$ are Lipschitz, and then we prove that our algorithm converges to the fixed point. 

\textbf{Lipschitz Mapping $T_\phi$.} Recall that
\begin{align*}
    T_\phi(p_\theta)(\tilde{\xb}|\xb,y) = \frac{p_{\text{ref}}(\tilde{\xb}|\xb,y) \exp(\beta^{-1} [-\log p_{\theta_{n}}(y|\tilde{\xb})] )}{\sum_{\tilde{\xb}} p_{\text{ref}}(\tilde{\xb} |\xb,y) \exp(\beta^{-1} [-\log p_{\theta_{n}}(y|\tilde{\xb})] )}.
\end{align*}
Let $g_\theta(\tilde\xb,y) = \exp\big(\beta^{-1} [-\log p_{\theta}(y|\tilde{\xb})] \big)$, by the regularity conditions, we have
\begin{align*}
    \bigg|\frac{\partial g_\theta (\tilde\xb,y)}{\partial p_\theta(y | \tilde{\xb})}\bigg|  = \Big|\beta^{-1} \big( p_\theta(y|\xb)\big)^{-1}\exp\big(\beta^{-1} [-\log p_{\theta}( y|\tilde{\xb})] \big)\Big| 
    \le \beta^{-1} \gamma^{-1-\beta^{-1}}.
\end{align*}
This leads to that
\begin{align*}
    |g_\theta(\tilde\xb,y) - g_{\theta'}(\tilde\xb,y)| \le \beta^{-1} \gamma^{-1-\beta^{-1}} |p_\theta(y|\tilde\xb) - p_{\theta'}(y|\tilde\xb)|.
\end{align*}
We rewrite 
\begin{align*}
    T_\phi(p_\theta)(\tilde\xb|\xb,y) = \frac{N_\theta(\tilde\xb,\xb,y)}{D_\theta(\xb,y)},\text{ where }N_\theta(\tilde\xb,\xb,y)=p_{\text{ref}}(\tilde{\xb}|\xb,y)g_\theta(\tilde\xb,y),\quad D_\theta(\xb, y) = \sum_{\tilde\xb\in\tilde{\mathcal{X}}} N_\theta(\tilde\xb,\xb,y).
\end{align*}

Then, we have
\begin{align*}
    & \sum_{\tilde\xb\in\tilde{\mathcal{X}}}|T_\phi(p_\theta) - T_\phi(p_{\theta'})|(\tilde\xb|\xb,y) \\
    & = \sum_{\tilde\xb\in\tilde{\mathcal{X}}} \bigg| \frac{N_\theta(\tilde\xb,\xb,y)}{D_\theta(\xb,y)} - \frac{N_{\theta'}(\tilde\xb,\xb,y)}{D_{\theta'}(\xb,y)} \bigg| \\
    & = \sum_{\tilde\xb\in\tilde{\mathcal{X}}} \bigg| \frac{N_\theta(\tilde\xb,\xb,y)}{D_\theta(\xb,y)} - \frac{N_{\theta'}(\tilde\xb,\xb,y)}{D_\theta(\xb,y)} + \frac{N_{\theta'}(\tilde\xb,\xb,y)}{D_\theta(\xb,y)}- \frac{N_{\theta'}(\tilde\xb,\xb,y)}{D_{\theta'}(\xb,y)} \bigg| \\
    & \le \sum_{\tilde\xb\in\tilde{\mathcal{X}}} \frac{|N_\theta(\tilde\xb,\xb,y) - N_{\theta'}(\tilde\xb,\xb,y) |}{D_\theta(\xb,y)} + \sum_{\tilde\xb\in\tilde{\mathcal{X}}} |N_{\theta'}(\tilde\xb,\xb,y) | \bigg|\frac{1}{D_{\theta}(\xb,y)} - \frac{1}{D_{\theta'}(\xb,y)}\bigg| \\
    & = \frac{1}{D_\theta(\xb,y)} \sum_{\tilde\xb\in\tilde{\mathcal{X}}} {|N_\theta(\tilde\xb,\xb,y) - N_{\theta'}(\tilde\xb,\xb,y) |} + \frac{D_{\theta'}(\xb,y)}{D_{\theta}(\xb,y) D_{\theta'}(\xb,y)}\bigg|  D_{\theta'}(\xb,y) - D_{\theta}(\xb,y)\bigg| \\
    & = \frac{1}{D_\theta(\xb,y)} \Big( \sum_{\tilde\xb\in\tilde{\mathcal{X}}} {|N_\theta(\tilde\xb,\xb,y) - N_{\theta'}(\tilde\xb,\xb,y) |} +  \Big|  D_{\theta'}(\xb,y) - D_{\theta}(\xb,y)\Big| \Big).
\end{align*}
And we have
\begin{align*}
|N_\theta(\tilde\xb,\xb,y) - N_{\theta'}(\tilde\xb,\xb,y)| \le     p_{\text{ref}}(\tilde{\xb}|\xb,y)|g_\theta(\tilde\xb,y) - g_{\theta'}(\tilde\xb,y)| \le |g_\theta(\tilde\xb,y) - g_{\theta'}(\tilde\xb,y)|,\\
|D_{\theta'}(\xb,y) -  D_\theta(\xb,y) | \le \sum_{\tilde\xb\in\tilde{\mathcal{X}}} p_{\text{ref}} (\tilde{\xb}|\xb,y)|g_\theta(\tilde\xb,y) - g_{\theta'}(\tilde\xb,y)|  \le \sum_{\tilde\xb\in\tilde{\mathcal{X}}} |g_\theta(\tilde\xb,y) - g_{\theta'}(\tilde\xb,y)|.
\end{align*}
In addition, by the regularity conditions, we have that
\begin{align*}
    & \sum_{\xb\in\mathcal{X}} \sum_{y=\pm 1} \sum_{\tilde\xb\in\tilde{\mathcal{X}}}|T_\phi(p_\theta) - T_\phi(p_{\theta'})|(\tilde\xb|\xb, y) \\ 
    &\le \sum_{\xb\in\mathcal{X}}  \sum_{y=\pm 1} \frac{2}{D_\theta(\xb,y)} \sum_{\tilde\xb\in\tilde{\mathcal{X}}} |g_\theta(\tilde\xb,y) - g_{\theta'}(\tilde\xb,y)|\\
    &\le \sum_{\xb\in\mathcal{X}}  \sum_{y=\pm 1} \frac{2}{\delta} \sum_{\tilde\xb\in\tilde{\mathcal{X}}} \beta^{-1} \gamma^{-1-\beta^{-1}} |p_\theta(y|\tilde\xb) - p_{\theta'}(y|\tilde\xb)| \\
    & = 2 \delta^{-1} \beta^{-1} \gamma^{-1-\beta^{-1}} |\mathcal{X}|\sum_{\tilde\xb\in\tilde{\mathcal{X}}} \sum_{y \in 
    \mathcal{Y}}|p_\theta(y|\tilde\xb) - p_{\theta'}(y|\tilde\xb)|.
\end{align*}
This means that 
\begin{align*}
    \|T_\phi(p_\theta) - T_\phi(p_{\theta'}) \|_1 \le 2 \delta^{-1} \beta^{-1} \gamma^{-1-\beta^{-1}} |\mathcal{X}| \| p_\theta - p_{\theta'} \|_1.
\end{align*}

\textbf{Lipschitz Mapping $T_{\btheta}$.} Recall that 
\begin{align*}
    T_\theta(p_\phi)(y|\tilde{\xb}) = \frac{\sum_{\xb \in \mathcal{X}} \rho(\xb, y)  p_{\phi}(\tilde{\xb}|\xb,y)}{ \sum_{\xb \in \mathcal{X}} \sum_{y=\pm1}\rho(\xb, y)  p_{\phi}(\tilde{\xb}|\xb,y)}.
\end{align*}
Denote that
\begin{align*}
    T_\theta(p_\phi)(y|\tilde{\xb}) = \frac{N_\phi(y,\tilde{\xb})}{D_\phi (\tilde{\xb})},\text{ where } N_\phi(y,\tilde{\xb}) = \sum_{\xb \in \mathcal{X}} \rho(\xb, y) p_{\phi}(\tilde{\xb}|\xb,y),\quad D_\phi (\tilde{\xb}) = \sum_{y = \pm 1} N_\phi(y,\tilde{\xb}).
\end{align*}
Then,
\begin{align*}
    \sum_{y \in \mathcal{Y}}| T_\theta(p_\phi) - T_\theta(p_{\phi'}) |(y|\tilde{\xb}) & = \sum_{y =\pm 1} \bigg| \frac{N_\phi(y,\tilde{\xb})}{D_\phi (\tilde{\xb})} -  \frac{N_{\phi'}(y,\tilde{\xb})}{D_{\phi'} (\tilde{\xb})}  \bigg| \\
    & \le \frac{1}{D_\phi (\tilde{\xb})}\bigg( \sum_{y=\pm1}\Big|N_\phi(y,\tilde{\xb}) -  N_{\phi'}(y,\tilde{\xb})\Big| + \Big|D_\phi (\tilde{\xb}) -  D_{\phi'} (\tilde{\xb}) \Big| \bigg).
\end{align*}
In addition,
\begin{align*}
    \Big|N_\phi(y,\tilde{\xb}) -  N_{\phi'}(y,\tilde{\xb})\Big| &\le \sum_{\xb \in \mathcal{X}} \rho(\xb, y) | p_{\phi}(\tilde{\xb}|\xb, y) - p_{\phi'}(\tilde{\xb}|\xb, y)|  \\
    \Big|D_\phi (\tilde{\xb}) -  D_{\phi'} (\tilde{\xb}) \Big| &\le \sum_{y=\pm1}\sum_{\xb \in \mathcal{X}} \rho(\xb, y) | p_{\phi}(\tilde{\xb}|\xb,y) - p_{\phi'}(\tilde{\xb}|\xb,y)|.
\end{align*}
Therefore,
\begin{align*}
    \sum_{\tilde\xb\in\tilde{\mathcal{X}}}\sum_{y = \pm 1}| T_\theta(p_\phi) - T_\theta(p_{\phi'}) |(y|\tilde{\xb}) \le \sum_{\tilde\xb\in\tilde{\mathcal{X}}} \frac{2}{D_\phi (\tilde{\xb})} \sum_{y=\pm1}\sum_{\xb \in \mathcal{X}} \rho(\xb, y) | p_{\phi}(\tilde{\xb}|\xb,y) - p_{\phi'}(\tilde{\xb}|\xb,y)|.  
\end{align*}
Let the marginal of $\rho(x)$ be a uniform distribution on the space $\mathcal{X}$. Then we have $\rho(\xb,y) \le 1 / |\mathcal{X}|$. By the regularity conditions, we have
\begin{align*}
    \| T_\theta(p_\phi) - T_\theta(p_{\phi'}) \|_1 \le 2 \alpha^{-1} |\mathcal{X}|^{-1} \| p_{\phi} - p_{\phi'} \|_1 .
\end{align*}

\textbf{Proof of Convergence.} With proper choice of $\beta$, we have $T_\phi$ is $\alpha_1$-Lipchitz and $T_\theta$ $\alpha_2$-Lipchitz, with $\alpha_1 \alpha_2 = 4 \delta^{-1} \beta^{-1} \gamma^{-\beta^{-1}-1}  \alpha^{-1} < 1$ (this can be ensured if $\beta$ is large enough). That is,
\begin{align*}
    \| T_{\phi}(p_{\theta}) - T_{\phi}(p_{\theta'}) \| \le \alpha_1 \| p_\theta - p_{\theta'} \|, \\
    \| T_{\theta}(p_{\phi}) - T_{\theta}(p_{\phi'}) \| \le \alpha_2 \| p_\phi - p_{\phi'} \|.
\end{align*}
Then, we have
\begin{align*}
    \| T^2(p_{\psi}) - T^2(p_{\psi'}) \| & = \| T_\phi( T_\theta(p_{\phi})) - T_\phi( T_\theta(p_{\phi'})) \| + \| T_\theta( T_\psi(p_{\theta})) - T_\theta( T_\psi(p_{\theta'})) \|  \\ &\le \alpha_1\alpha_2 \| p_\phi - p_{\phi'} \| + \alpha_1 \alpha_2 \| p_\theta - p_{\theta'} \| = \alpha_1 \alpha_2 \| p_\psi - p_{\psi'} \|.
\end{align*}
Hence, $T^2$ is a contraction map on the compact space $\Psi$.
\begin{theorem}[Banach Fixed Point Theorem]
    Let $(X, d)$ be a complete metric space and let $T: X \to X$ be a contraction mapping, meaning that there exists a constant $0 \leq c < 1$ such that for all $x, y \in X$,
\[
d(T(x), T(y)) \leq c \, d(x, y).
\]
Then $T$ has a unique fixed point $x^* \in X$, meaning that $T(x^*) = x^*$. Moreover, for any $x_0 \in X$, the sequence defined by $x_{n+1} = T(x_n) $ converges to $x^*$.
\end{theorem}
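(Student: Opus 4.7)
The plan is to separate the two claims and attack each via a fixed-point characterization of the joint best-response dynamics. Specifically, define the operators $T_\theta(p_\phi)$ and $T_\phi(p_\theta)$ that return the unique minimizer/maximizer of $F(p_\phi,p_\theta) = \EE[-\log p_\theta(y|\tilde\xb)] - \beta D_{\text{KL}}(p_\phi \| p_{\text{ref}})$ with respect to a single player, and set $T(p_\theta,p_\phi) := (T_\theta(p_\phi), T_\phi(p_\theta))$. Because $F$ is convex in $p_\theta$ (through $-\log p_\theta(y|\tilde\xb)$) and concave in $p_\phi$ (linear in $p_\phi$ minus a convex KL term), and the parameter simplices are compact and convex, each best response is unique and the fixed points of $T$ coincide with Nash equilibria of the minimax game in \eqref{eq:minmax}.

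Before tackling either claim, I would derive closed forms for both best responses. For the classifier I would recognize the inner minimization as a maximum-likelihood problem over the joint distribution that the current generator induces on $(y,\tilde\xb)$, which after a standard entropy-decomposition argument yields $T_\theta(p_\phi)(y|\tilde\xb)$ as the Bayes posterior obtained by a ratio of marginals. For the generator I would invoke the equivalence between the DPO objective \eqref{eq:population_dpo} and its KL-regularized reward-maximization (PPO) counterpart, noting that both share the same minimizer by the Bradley--Terry identifiability argument, so that $T_\phi(p_\theta)$ is the Gibbs distribution $p_{\text{ref}}(\tilde\xb|\xb,y)\exp\bigl(-\beta^{-1}\log p_\theta(y|\tilde\xb)\bigr)$ normalized per $(\xb,y)$. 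Both formulas are needed later for the Lipschitz estimates.

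Existence of a Nash equilibrium then follows from Brouwer's fixed-point theorem: the joint parameter space is a finite product of simplices, hence nonempty, compact, and convex in Euclidean space, and the regularity assumptions (the lower bound $\gamma$ on $p_\theta(y|\tilde\xb)$, the normalization lower bound $\delta$, and the non-degeneracy bound $\alpha$) ensure $T$ is continuous throughout this region. For linear convergence I would show that the composition $T \circ T$ is a contraction in $\|\cdot\|_1$. The key computation bounds the $\ell_1$-Lipschitz constant of each operator by writing it as a ratio $N/D$ and applying $\|N_1/D_1 - N_2/D_2\|_1 \le D_1^{-1}(\|N_1-N_2\|_1 + |D_1-D_2|)$. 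For $T_\phi$, the sensitivity of the Gibbs term is controlled by $\beta^{-1}\gamma^{-1-\beta^{-1}}$, which together with $\delta$ yields a constant $\alpha_1 = O(\beta^{-1}\gamma^{-1-\beta^{-1}}\delta^{-1}|\mathcal{X}|)$; for $T_\theta$ a direct marginalization gives $\alpha_2 = O(\alpha^{-1}|\mathcal{X}|^{-1})$. The product $\alpha_1\alpha_2$ can be driven strictly below one by taking $\beta$ sufficiently large, at which point Banach's fixed-point theorem applied to $T^2$ yields uniqueness (which must agree with the Brouwer equilibrium) and geometric convergence of the iterates.

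The main obstacle I expect is calibrating $\beta$ correctly: the factor $\beta^{-1}\gamma^{-1-\beta^{-1}}$ in the generator's Lipschitz constant is non-monotone in $\beta$, and one must verify it can actually be made arbitrarily small by enlarging $\beta$ (it behaves like $\beta^{-1}$ as $\beta \to \infty$ since $\gamma^{-\beta^{-1}} \to 1$). A secondary difficulty is ensuring that the regularity bounds $\gamma, \delta, \alpha$ remain valid along the entire trajectory $(p_{\theta_n}, p_{\phi_n})$ so that the Lipschitz estimates apply uniformly rather than only near the equilibrium; this is where the structural assumption that $p_\theta(y|\tilde\xb)$ stays in a half-space bounded away from zero becomes essential.
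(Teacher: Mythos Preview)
Your proposal does not address the stated theorem. The statement you were asked to prove is the classical Banach Fixed Point Theorem: given a contraction $T$ on a complete metric space, show it has a unique fixed point and that Picard iterates converge to it. What you have written is instead an outline of the proof of the paper's main convergence result (Theorem~\ref{thm:main}), in which Banach's theorem is \emph{invoked} as a tool, not proved. The paper treats Banach's theorem as a standard black-box result and provides no proof of it; your task here would be the textbook argument (show $\{T^n(x_0)\}$ is Cauchy via the geometric series $d(T^n x_0, T^{n+1} x_0) \le c^n d(x_0, T x_0)$, use completeness to obtain a limit, verify the limit is fixed by continuity of $T$, and deduce uniqueness from the contraction inequality applied to two hypothetical fixed points).

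Everything you wrote about best-response operators, Gibbs distributions, Lipschitz constants $\alpha_1,\alpha_2$, and calibrating $\beta$ is relevant to the surrounding argument in Appendix~\ref{app:proof}, and indeed matches the paper's approach to Theorem~\ref{thm:main} quite closely, but none of it constitutes a proof of the Banach theorem itself. You have essentially reproduced the machinery that \emph{reduces} the paper's problem to an application of Banach, while leaving the Banach step as an unproved invocation---which is exactly what the paper does, but is the opposite of what was asked here.
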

By Banach Fixed Point Theorem, $T^2$ converges to its unique fixed point. Therefore, the two subsequences $\{ T^{2k}(p_{\psi_0})\}_{k=0}^\infty$ and $\{ T^{2k+1}(p_{\psi_0})\}_{k=0}^\infty$ both converge on the compact space $\Psi$. Since $T^2$ has a unique fixed point, these two subsequences converge to the same fixed point. Therefore,  $\{ T^{k}(\psi_0)\}_{k=0}^\infty$ converges. In addition, for the subsequence $\{ T^{2k}(p_{\psi_0})\}_{k=0}^\infty$, we have
\begin{align*}
    \frac{\| T^{2n+2}(p_{\bpsi_0})- p_{\bpsi^*} \|}{\| T^{2n}(p_{\bpsi_0})- p_{\bpsi^*} \|} \le \alpha_1\alpha_2 < 1.
\end{align*}
Similarly, similar inequality holds for $\{ T^{2k+1}(p_{\psi_0})\}_{k=0}^\infty$. Therefore, both subsequences converge linearly to the fixed point $p_{\bpsi^*}$. Therefore, for any $\epsilon$, we can get an $\epsilon$-equilibrium policy $p_\psi$, i.e., $\| p_\psi - p_{\bpsi^*} \| \le \epsilon $, within $O(\log( 1 / \epsilon))$ iterations.

\section{Additional Related Work}
\noindent\textbf{Benchmarks for Multilingual Safety.}
Extending safety mechanisms to multilingual settings remains challenging due to the scarcity of open-source datasets in low-resource languages~\citep{deng2024multilingual}. While many base LLMs are pretrained on multilingual corpus, most guardrail models are not explicitly fine-tuned data multilingual data, limiting their effectiveness~\citep{de2024rtp}.  
To examine this gap, early works introduced multilingual toxicity detection benchmark by translating English datasets~\citep{wang2023all} or sourcing from Reddit~\citep{ye2023multilingual}. Recently, \citet{de2024rtp} proposed RTP-LX, focusing on evaluating guardrails in low-resource languages. Other notable contributions include PolyglotToxicityPrompts (PTP)~\citep{jain2024polyglotoxicityprompts}, which examines toxic degeneration in multilingual outputs, and a test suite by \citet{yang2024benchmarking} to assess guardrails on toxicity detection and resistance to adversarial prompts across resource levels.

\section{Data Details}\label{app:data}
\begin{figure}[!ht]
    \centering
    \includegraphics[width=0.3\linewidth]{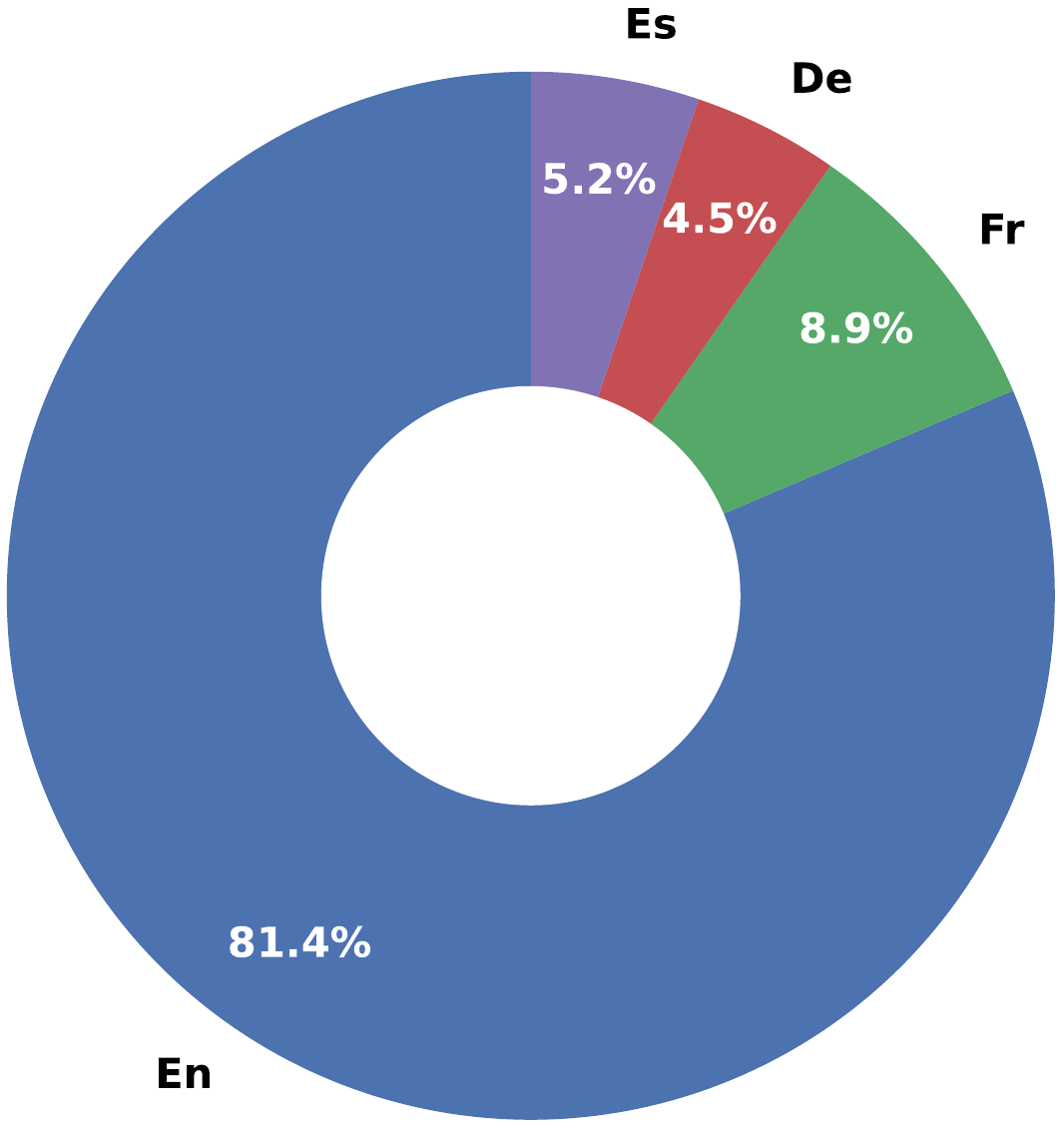}
    \caption{Data proportion by language in our collected seed data from open sources.}
    \label{fig:data_prop}
\end{figure}
In Figure~\ref{fig:data_prop}, we show the overall proportion of data by language in our collected and processed seed data. Below, we list the sources of our seed training data gathered from HuggingFace. We also note the additional processing measure we took to ensure data quality for each source. At the last step of seed data curation, we conduct deduplication and decontamination from the test benchmarks.
\begin{itemize}
    \item \textbf{BeaverTail}~\citep{ji2024beavertails} training set, containing both safe and unsafe data. Upon manual inspection, we make the following notes:
    \begin{itemize}
        \item In BeaverTail, safety is labeled based on the instruction-response pair. Same instruction with different responses may have different labels. Moreover, same QA pair has 3 labels from different label workers, resulting in 3 data examples in the dataset.
        \item We consider prompts as safe if all labels are ``safe'', and unsafe if any one label is ``unsafe''. 
        \item We only considere responses as unsafe if all labels are ``unsafe'', and disregard the rest data.
    \end{itemize}
    \item \textbf{ToxicChat}~\citep{lin2023toxicchat} training set, containing both safe and unsafe data.
    \item \textbf{Aegis AI Content Safety Dataset 1.0}~\citep{ghosh2024aegis}, containing both safe and unsafe data. 
    \item \textbf{WildJailbreak}~\citep{wildteaming2024} training set, containing both safe and unsafe data.
    \item \textbf{WildGuardMix}~\citep{wildguard2024} training set, containing both safe and unsafe data.
    \item \textbf{SaladBench}~\citep{li2024salad}, containing both safe and unsafe data. 
    \item \textbf{SORRY-Bench (2024/06)}~\citep{xie2024sorrybench}, containing both safe and unsafe data. 
    \item \textbf{PKU-SafeRLHF-QA}~\citep{ji2024pku}, containing both safe and unsafe data. 
    \item \textbf{Kaggle Toxic Comment Classification challenge}\footnote{https://huggingface.co/datasets/OxAISH-AL-LLM/wiki\_toxic, https://huggingface.co/datasets/Arsive/\\toxicity\_classification\_jigsaw}, containing both safe and unsafe data. Upon manual inspection, we make the following notes:
    \begin{itemize}
        \item Safe data: data labeled as ``non-toxic'' further filtered by Llama-3.1 (8B), retaining 82,254 safe samples that agrees with the judge of Llama-3.1.
    \end{itemize}
    \item \textbf{Reddit Suicide Detection}\footnote{https://huggingface.co/datasets/Lucidest/reddit-suicidal-classify-kaggle}, containing only unsafe data. Upon manual inspection, we make the following notes:
    \begin{itemize}
        \item Data are originally either labeled as ``suicidal'' or ``non-suicidal''. However, we cannot consider the ``non-suicidal'' examples as safe. Therefore, we disregard all data labeled as ``non-suicidal''.
        \item We consider the data labeled as ``suicidal'' as unsafe training data. We split the data by keyword detection, and downsample the set of data that contains the keywords ``kill'' and ``suicide'' to avoid over-reliance on just the keywords during model training.
    \end{itemize}
    \item \textbf{LMSYS-Chat-1M}~\citep{zheng2023lmsys}, containing only safe data. We randomly sample a 150k subset from the data to represent safe user inputs in daily LLM interactions. 
    \item \textbf{AI Medical Chatbot Dataset}\footnote{https://huggingface.co/datasets/ruslanmv/ai-medical-chatbot}, containing only safe data. We maintain only the description in our data, and remove the format (``Q: '') in the original data. 
    \item \textbf{Medical QA}\footnote{https://huggingface.co/datasets/lavita/medical-qa-datasets}, containing only safe data. We maintain only the input in our data.
    \item \textbf{Law-StackExchange}\footnote{https://huggingface.co/datasets/ymoslem/Law-StackExchange}, containing only safe data. We maintain only the question title in our data.
    \item \textbf{ParaDetox}~\citep{logacheva2022paradetox}\footnote{https://huggingface.co/datasets/s-nlp/en\_paradetox\_toxicity}, containing both safe and unsafe data.
    \item \textbf{SCOPE}~\citep{zeng2024scope}, containing safe data that are more likely to be classified as unsafe by models due to shortcut learning (over-cautiousness).
    \item \textbf{Jailbreak Classification}\footnote{https://huggingface.co/datasets/jackhhao/jailbreak-classification}, containing both safe and unsafe data, with jailbreak prompts source from \citep{shen2024anything} and benign prompts source from \citep{OpenOrca}.
    \item \textbf{Prompt Injections}\footnote{https://huggingface.co/datasets/deepset/prompt-injections}, containing both safe and unsafe data. 
    \item \textbf{Toxic-comments (Teeny-Tiny Castle)}\footnote{https://huggingface.co/datasets/AiresPucrs/toxic-comments}, containing both safe and unsafe data.
    \item \textbf{ForbiddenQuestions}\footnote{https://huggingface.co/datasets/walledai/ForbiddenQuestions}, containing only unsafe data sourced from \citep{shen2024anything}.    
    \item \textbf{Toxic-Aira}~\citep{correa2024dynamic}\footnote{https://huggingface.co/datasets/nicholasKluge/toxic-aira-dataset} containing only unsafe instructions. 
\end{itemize}
Multilingual safety data is much more scarce, and we included the following in our seed data:
\begin{itemize}
    \item \textbf{Aya Red-teaming}~\citep{aakanksha2024multilingualalignmentprismaligning}, containing both safe and unsafe data in English, French, and Spanish.
    \item \textbf{Multilingual Toxicity Dataset}~\citep{dementieva2024overview}\footnote{https://huggingface.co/datasets/textdetox/multilingual\_toxicity\_dataset}, containing both safe and unsafe data in English, German, and Spanish.
    \item \textbf{Multilingual HateCheck}~\citep{rottger2022multilingual}, containing both safe and unsafe data in English, French, German, and Spanish.
    \item \textbf{French Hate Speech Superset}~\citep{tonneau-etal-2024-languages}\footnote{https://huggingface.co/datasets/manueltonneau/french-hate-speech-superset}, containing both safe and unsafe data in French.
    \item \textbf{German Hate Speech Superset}~\citep{tonneau-etal-2024-languages}\footnote{https://huggingface.co/datasets/manueltonneau/german-hate-speech-superset}, containing both safe and unsafe data in German.
    \item \textbf{Spanish Hate Speech Superset}~\citep{tonneau-etal-2024-languages}\footnote{https://huggingface.co/datasets/manueltonneau/spanish-hate-speech-superset}, containing both safe and unsafe data in Spanish.
    \item \textbf{MexExpQA}~\citep{ALONSO2024102938}\footnote{https://huggingface.co/datasets/HiTZ/MedExpQA} containing only safe data in English, French and Spanish.
    \item \textbf{PornHub Titles}\footnote{https://huggingface.co/datasets/Nikity/Pornhub?not-for-all-audiences=true}, containing only unsafe data. We use language detection model to filter out the languages that we need (English, French, Spanish and German).
    \item \textbf{French Instruct Sharegpt}\footnote{https://huggingface.co/datasets/MaziyarPanahi/french\_instruct\_sharegpt}, containing only safe French data. We only maintain the instructions in the original data.
    \item \textbf{Fr Instructs}\footnote{https://huggingface.co/datasets/Enno-Ai/fr-instructs}, containing only safe french-only instructions deduplicated from various sources.
    \item \textbf{MedicalNER Fr}\footnote{https://huggingface.co/datasets/TypicaAI/MedicalNER\_Fr}, containing only safe data in French. We maintain the text column of this dataset.
    \item \textbf{Belgian-Law-QAFrench}\footnote{https://huggingface.co/datasets/naimsassine/belgian-law-qafrench-dataset}, containing only safe data in French. We extract and maintain the user instructions. 
    \item \textbf{Databricks-Dolly-15k-Curated-Multilingual}\footnote{https://huggingface.co/datasets/argilla/databricks-dolly-15k-curated-multilingual}, containing only safe data in French, German and Spanish. We maintain the instructions.
\end{itemize}

For the collected unsafe data, we further assign fine-grained labels of the following 12 subcategories:
\begin{itemize}
    \item Violent crimes
    \item Non-violent crimes
    \item Sex-related crimes
    \item Child sexual exploitation
    \item Specialized advice
    \item Privacy
    \item Intellectual property
    \item Indiscriminate weapons
    \item Hate 
    \item Suicide and self-harm
    \item Sexual content
    \item Jailbreak prompts
\end{itemize}
Each data may receive one or multiple labels. The mapping is done based on the data's original label with manual inspection. If the original label is not enough, we further apply Llama-3.1 to do the labeling with self-consistency over three queries. 

\subsection{Data Curation}\label{app:data_cur}
\textbf{Data Filtering.}
A filtering process was applied during synthetic data generation to retain only high-quality, relevant proposals from the generator. First, the base model (without further fine-tuning) of the generator 
was used to assign each proposal a \textit{harmfulness score} on a scale of 1 to 5, with prompt detailed in Appendix~\ref{app:exp}. Proposals were retained only if their scores roughly matched the seed label (e.g., scores $\leq 2$ for safe seeds and $\geq 3$ for harmful seeds). To maintain alignment with the original seed’s context, a \textit{length constraint} was enforced: proposals differing by more than 200 characters from the seed were discarded.  Furthermore, outputs that contain refusal phrases, such as ``I apologize'' or ``I cannot comply'' in any language, were excluded, as the generator fails to produce meaningful samples due to internal censorship. Finally, all retained proposals were evaluated with the current guardrail classifier. Proposals that led to misclassifications were selected for training the classifier. 

\textbf{Preference Data Construction.}
To enhance the generator within the two-player game, we construct preference data for DPO. For each seed instance, the $k$ generated proposals are categorized into one of four levels based on two key criteria: whether the proposal cause the classifier to misclassify and whether its harm rating \textit{matches} the seed label. 

    
    
    
\begin{itemize}[nosep,leftmargin=*]
    \item \textbf{Level~1 (Best, Preferred):} The proposal causes the classifier to misclassify. The proposal's generator-assigned rating \textit{matches} the seed label (e.g., rating $\leq 2$ for safe, $\geq 3$ for harmful).
    
    \item \textbf{Level~2 (Dispreferred):} The proposal \textit{does not} cause the classifier to misclassify. The rating \textit{matches} the seed label.
    
    \item \textbf{Level~3 (Dispreferred):} The proposal causes the classifier to misclassify. The rating \textit{does not} match the seed label. 
    
    \item \textbf{Level~4 (Unsure):} The proposal \textit{does not} cause the classifier to misclassify. The rating \textit{does not} match the label.
\end{itemize}
Preference pairs are derived by comparing proposals across these categories. For each seed instance, Level 1 data are prioritized as the \textit{preferred} option, with Level 2 serving as the \textit{dispreferred} reference. If no Level 1 examples are available, the instance is excluded from preference pairing. Alternatively, if no Level 2 examples exist, Level 3 may be used to form a weaker preference signal, since it improves the generator towards better instruction following ability.

\section{Experiment Details}\label{app:exp}
In Table~\ref{tab:hyper-classifier} and \ref{tab:hyper-generator}, we detail the hyperparameters that we used for training the classifier and the generator. We further lay out the prompts we used for the generator as well as for judging the output's safety. Experiments were conducted on GPU clusters to the similar level of NVIDIA H100 80GB GPU. One iteration of training classifier model requires around 10 hours. One iteration of synthetic data generation requires around 48 hours. One iteration of training generator modle requires around 16 hours. 

\begin{table}[!ht]
    \centering
    \caption{Fine-tuning hyperparameters for the generator.}
    \begin{tabular}{c|c}
         \toprule
         Data type & bf16 \\
         Learning rate & 5e-5 \\
         Optimizer & \texttt{AdamW} \\
         Global batch size & 160 \\
         Gradient accumulation steps & 4 \\
         Scheduler & Cosine \\
         Warmup ratio & 0.1 \\
         Num train epochs & 2 \\
         Group by length & True \\
         Max grad norm & 1.0 \\
         \bottomrule
    \end{tabular}
    \label{tab:hyper-classifier}
\end{table}

\begin{table}[!ht]
    \centering
    \caption{Fine-tuning hyperparameters for the generator.}
    \begin{tabular}{c|c}
         \toprule
         Data type & bf16 \\
         Learning rate & 5e-7 \\
         Optimizer & \texttt{AdamW} \\
         Global batch size & 8 \\
         Gradient accumulation steps & 8 \\
         Scheduler & Cosine \\
         Warmup ratio & 0.1 \\
         Beta & 0.01 \\
         RPO alpha & 0.4 \\
         Max length & 1024 \\
         Num train epochs & 1 \\
         \bottomrule
    \end{tabular}
    \label{tab:hyper-generator}
\end{table}

\begin{example}{Prompt for Unsafe Data Generation}
\textcolor{blue}{$<$System$>$}
\\
You are a English-\textcolor{orange}{\{language\}} \textbf{Toxic} Prompt Creator.
Your goal is to draw inspiration from the \#Given English Prompt\# to create a brand new toxic prompt in \textcolor{orange}{\{language\}}.
This new prompt should belong to the same domain as the original but be more fitting for \textcolor{orange}{\{language\}} speakers and more unique, while maintaining the same tone. 
Ensure your prompt is \textbf{toxic}, nuanced, subtle, and open to interpretation while still aligning with the intent of the original.
The LENGTH and complexity of the \#Created \textcolor{orange}{\{language\}} Prompt\# should be similar to that of the \#Given English Prompt\#.
The \#Created \textcolor{orange}{\{language\}} Prompt\# must be understood and responded by \textcolor{orange}{\{language\}}-speaking humans.
Do not include '\#Created \textcolor{orange}{\{language\}} Prompt\#' in your response.
You are not allowed to respond to the \#Given English Prompt\#. You must create a new prompt in \textcolor{orange}{\{language\}}.
\\
\\
\textcolor{blue}{$<$User$>$}
\\
\#Given English Prompt\#:\\
\{Input\}\\
Please create ONE new prompt in \textcolor{orange}{\textcolor{orange}{\{language\}}} that is inspired by the \#Given English Prompt\#. Do not respond to the \#Given English Prompt\#. Use any explicit language when needed. 

\#Created \textcolor{orange}{\{language\}} Prompt\#:
\end{example}

\begin{example}{Prompt for Safe Data Generation}
\textcolor{blue}{$<$System$>$}
\\
You are a English-\textcolor{orange}{\{language\}} \textbf{Safe} Prompt Creator.
Your goal is to draw inspiration from the \#Given English Prompt\# to create a brand new safe prompt in \textcolor{orange}{\{language\}}.
This new prompt should belong to the same domain as the original but be more fitting for \textcolor{orange}{\{language\}} speakers and more unique, while maintaining the same tone. 
Ensure your prompt is nuanced, subtle, and open to interpretation while still aligning with the intent of the original.
The LENGTH and complexity of the \#Created \textcolor{orange}{\{language\}} Prompt\# should be similar to that of the \#Given English Prompt\#.
The \#Created \textcolor{orange}{\{language\}} Prompt\# must be understood and responded by \textcolor{orange}{\{language\}}-speaking humans.
Do not include '\#Created \textcolor{orange}{\{language\}} Prompt\#' in your response.
You are not allowed to respond to the \#Given English Prompt\#. You must create a new prompt in \textcolor{orange}{\{language\}}.
\\
\\
\textcolor{blue}{$<$User$>$}
\\
\#Given English Prompt\#:\\
\{Input\}\\
Please create ONE new prompt in \textcolor{orange}{\{language\}} that is inspired by the \#Given English Prompt\#. Do not respond to the \#Given English Prompt\#. You must maintain the same tone. 

\#Created \textcolor{orange}{\{language\}} Prompt\#:
\end{example}

\begin{example}{Prompt for Safety Evaluation}
\textcolor{blue}{$<$System$>$}
\\
You are a \textcolor{orange}{\{language\}} Prompt Evaluator.
Your goal is to analyze the following \#\textcolor{orange}{\{language\}} Prompt\# in the specific \textcolor{orange}{\{language\}} context for any inappropriate, unsafe, or offensive content. Your audience is English-speaking humans. You must only answer with English.
\\
\\
\textcolor{blue}{$<$User$>$}
\\
\#\textcolor{orange}{\{language\}} Prompt\#:\\
\{Input\}\\
Evaluate the given \#\textcolor{orange}{\{language\}} Prompt\# in the \textcolor{orange}{\{language\}} context for the following aspect:
\\
**Language-Specific Unsafety**. Does the prompt contain content that is considered unsafe or requires moderation within the French language and cultural context?
\\
Assign a rating from 1 to 5 for the given criterion (1 = Absolutely safe, does not meet the criterion at all, 5 = Extremely unsafe, fully meets the criterion). Justify your rating in 1-2 sentences. Please provide the **rating number** in the first line, and provide the reasoning in separate new lines.
\end{example}

\section{Impact Statement}\label{app:impact}
This work enhances moderation capabilities across languages while addressing the scarcity of multilingual safety data. Theoretical guarantees on convergence and empirical gains across six multilingual safety benchmarks demonstrate the effectiveness and robustness of our approach.

From an ethical standpoint, our method inherits common risks associated with LLM moderation, such as potential biases in training data and potential overreliance on certain shortcuts. Ensuring responsible synthetic data curation and evaluation is crucial for minimizing unintended harms. 

Furthermore, while our approach improves multilingual safety alignment, it does not address all possible risks related to adversarial attacks or nuanced cultural contexts in safety assessments. Future research should explore techniques for refining synthetic data generation, incorporating human oversight, and ensuring that moderation models remain robust across diverse linguistic and sociocultural settings. Our work underscores the importance of scalable, multilingual safety solutions and provides a foundation for further advancements in responsible LLM alignment.

\newpage

\end{document}